\documentclass[10pt,twocolumn,letterpaper]{article}

\usepackage[algorithms,pagenumbers]{wacv}

\usepackage{amsmath,amssymb}
\usepackage{booktabs}          % \toprule \midrule \bottomrule
\usepackage{multirow}
\usepackage{enumitem}
\usepackage{amsthm}
\usepackage{float}
\usepackage{algorithm}
\usepackage{algpseudocode}
\newtheorem{proposition}{Proposition}
\definecolor{wacvblue}{rgb}{0.21,0.49,0.74}
\usepackage[pagebackref,breaklinks,colorlinks,allcolors=wacvblue]{hyperref}
\usepackage[capitalise,noabbrev,capitalize]{cleveref}   % must load AFTER hyperref

\def\wacvPaperID{2805}
\def\confName{WACV}
\def\confYear{2027}

\title{Predicted Class Correction: Conformal Thresholds for Classes\\with No Calibration Data}

\author{One Octadion$^{1}$\thanks{Corresponding author.} \quad Novanto Yudistira$^{1}$ \quad
Yuta Nakashima$^{2}$\\[4pt]
{\normalsize $^{1}$Universitas Brawijaya, Malang, Indonesia \quad
$^{2}$SANKEN, The University of Osaka, Osaka, Japan}\\[2pt]
{\normalsize\texttt{octadion@student.ub.ac.id} \quad \texttt{yudistira@ub.ac.id} \quad
\texttt{n-yuta@im.sanken.osaka-u.ac.jp}}
}

\begin{document}
\maketitle

\begin{abstract}
Class-conditional conformal prediction calibrates a separate score threshold for every class
and therefore requires labelled calibration examples of every class. In long-tailed
deployments these examples are precisely what is missing, as most rare classes are consumed
by training and are left with few or no held-out examples. Existing class-conditional methods
are then either undefined for such a class or fall back to the shared marginal threshold. We
ask whether the per-class correction can instead be predicted from descriptors of the class
that require no labels at all.
We propose Predicted Class Correction (PCC), which regresses the offset between the per-class
and the marginal conformal quantile on geometric descriptors of the classifier head, shrinks
the prediction by a scalar selected on the labelled classes, and refits a single marginal
offset. The descriptors are functions of the model's weights, so a class with zero
calibration examples still receives a threshold specific to it, and only one scalar is
calibrated, so split conformal's marginal guarantee is retained on the label space it is
calibrated on (Prop.~\ref{prop:marginal}).
On ImageNet, PCC improves worst-class coverage on such classes by $0.058$ $[0.006,0.110]$ at
matched average set size, $42\%$ of an oracle that observes the test labels. Of eight
baselines run on the same splits, six change the statistic by exactly zero and the best
reaches $0.016$ with an interval containing zero, so PCC is the only one whose improvement is
distinguishable from none; two of the eight are undefined on all $300$ classes. The improvement
persists with only $100$ labelled classes out of $1000$.
A second finding concerns how such methods are evaluated. Worst-class coverage becomes
unmeasurable before it becomes unachievable: below roughly $35$ test examples per class no
improvement can be demonstrated even for an oracle given the test labels, so a null result
there is a property of the evaluation set rather than of the method. Most classes of both
long-tailed benchmarks sit below it, and restricting evaluation above it recovers a significant
improvement on Pl@ntNet-300K, where the effect had appeared absent.
\end{abstract}

\section{Introduction}
\label{sec:intro}

A prediction set replaces one fallible prediction with a short list that is likely to contain
the true label. Conformal prediction turns any trained classifier into such a procedure, with
a finite-sample coverage guarantee and no assumptions on the data distribution
\cite{vovk2005algorithmic,shafer2008tutorial,angelopoulos2023gentle}.

\begin{figure}[t]
\centering
\includegraphics[width=\linewidth]{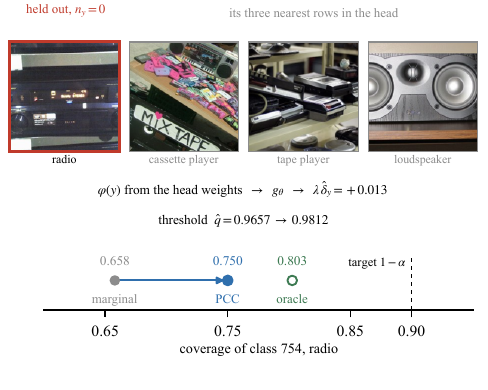}
\caption{A class with no calibration example still receives a threshold of its own. ImageNet
class $754$ is held out, so $n_y=0$: every classwise method either leaves it undefined or
hands it the shared marginal threshold. Its descriptor $\varphi(y)$ is read from the
classifier head, whose three nearest rows to it are other audio equipment, and the fitted
regression turns that geometry into an offset. The threshold moves from $\hat q = 0.9657$ to
$0.9812$ and coverage of the class from $0.658$ to $0.750$, against $0.803$ for an oracle
given the test labels. The class shown is the held-out class with the lowest coverage under
the marginal threshold, fixed by that rule before any image was looked at.}
\label{fig:teaser}
\end{figure}

The guarantee that comes for free is marginal: averaged over all inputs, the set contains the
true label with probability at least $1-\alpha$. Since marginal coverage is a prevalence-weighted
average of the class-conditional coverages, it is dominated by the frequent classes, and a
single shared threshold systematically under-covers the hard and rare ones, and coverage conditional
on every input is unattainable in finite samples without further assumptions
\cite{lei2014distribution,barber2021limits}. In biodiversity monitoring
and in diagnosis the rare classes are often the ones that matter most, so the label a user most
needs to see is the one the set most often omits. Recovering per-class control requires
calibrating a separate threshold for each class, which is what classwise conformal prediction
\cite{vovk2005algorithmic,vovk2012conditional,sadinle2019least} and its successors
\cite{ding2023class,shi2024rank,ding2026conformal} do.

Those methods all require labelled calibration examples of the class whose threshold they are
setting. In long-tailed deployments \cite{zhang2023deep} most rare classes are consumed by model training, which,
as \citet{ding2026conformal} note, leaves them with ``few or zero holdout examples to use for
calibrating uncertainty quantification methods.'' The zero case is common: the median class has three
held-out examples on Pl@ntNet-300K \cite{garcin2021plantnet} and two on iNaturalist-2018
\cite{vanhorn2018inaturalist}. What existing methods do with such a class is
summarised in \hyperref[tab:methods]{Tab.~S1} of the supplementary material. \textsc{Classwise} CP and \textsc{Rc3p} return an infinite
threshold, so the set becomes the entire label space; \textsc{Clustered} CP falls back to the
shared marginal threshold; and the kernel-weighted method of \citet{ding2026conformal} embeds
each class by its own calibration scores, so classes with none collapse to a single point and
share one threshold. None of this is unreasonable, since with no labelled example of a class
there is nothing to condition on.

\paragraph{Objective.} Something about a class is nevertheless known without any of its
labels. A trained classifier already places every class somewhere in its output geometry: the
row of the classifier head that scores class $y$ has a norm and a bias, and it sits at some
set of distances from the rows of competing classes. Crowded rows correspond to classes that
are easily confused and therefore hard to cover. These quantities are available for a class
with zero calibration examples, since they are determined by the model's parameters and do
not involve any sample. Our goal is to use them to assign such a class a threshold that is
specific to it, while keeping the marginal coverage guarantee that split conformal prediction
provides. \Cref{fig:teaser} shows one such class.

\paragraph{Our approach.} Predicted Class Correction (PCC) proceeds in three steps
(\cref{alg:pcc}). On the
classes that do have calibration data we observe the offset $\delta_y$ between the per-class
conformal quantile and the marginal one, and fit a ridge regression of $\delta_y$ on the
descriptors $\varphi(y)$. We then apply the fitted regression to every class, including those
with no data, and shrink its output by a factor $\lambda$ chosen on the labelled classes only.
Finally we refit one scalar offset so that marginal coverage returns to its target. The
shrinkage step does substantial work here, since worst-class coverage is governed by the
largest error in the predicted offset, which its average error does not control, so applying
the raw prediction at full strength is harmful (\cref{tab:ablation}).

\paragraph{Summary of findings.} We evaluate on ImageNet \cite{russakovsky2015imagenet}, two
long-tailed datasets \cite{garcin2021plantnet,vanhorn2018inaturalist}, three backbones, four
score functions and $15$ corruption types \cite{hendrycks2019benchmarking}, for a total of
$412$ configurations. (i) On ImageNet classes with zero calibration examples, PCC improves
worst-class coverage by $0.058$ $[0.006,0.110]$ at matched average set size and is the only one
of nine methods whose interval excludes zero; six of the eight baselines change the
statistic by exactly zero (\cref{tab:competitors}). (ii) Few classes need to be labelled for this to work: holding out
$90\%$ of classes, which leaves $100$ labelled classes out of $1000$, still yields $0.052$
with a confidence interval excluding zero, and our protocol's floor is $p+2$ labelled classes
for $p$ descriptors (\cref{fig:sweeps}b). (iii) Below roughly $35$ evaluation examples per
class, an oracle that observes the test labels cannot improve worst-class coverage either, so
what disappears there is the room to improve and not the ability to take it
(\cref{fig:sweeps}a). This one axis accounts for all four settings in which the method
initially appeared to fail, and restricting evaluation above the threshold recovers a
significant improvement on Pl@ntNet-300K, though not on iNaturalist-2018.

\section{Related Work}
\label{sec:related}

\paragraph{Class-conditional conformal prediction.} Classwise conformal prediction, an
instance of Mondrian conformal prediction \cite{vovk2005algorithmic,vovk2012conditional}, calibrates an
independent quantile per class and thereby attains class-conditional coverage
\cite{sadinle2019least,guan2022prediction}, at the price of
very large sets when classes are thin \cite{sadinle2019least,lofstrom2015bias}. Two lines of
work soften that price by sharing information between classes: \textsc{Clustered} CP groups
classes by an embedding of their score distributions and calibrates one quantile per cluster
\cite{ding2023class}, while rank-calibrated CP couples a per-class quantile with a per-class
rank budget \cite{shi2024rank}. Most recently, \citet{ding2026conformal} embed each class by
the quantiles of its own score distribution and compute a kernel-weighted quantile that
borrows from nearby classes, alongside a prevalence-adjusted score function. Few-shot conformal
prediction \cite{fisch2021few} also targets thin calibration data, by meta-learning the score
from auxiliary tasks that have plenty of labelled examples, whereas here the class has none.

All of these embeddings are built from the labelled calibration scores of the class being
embedded. A class with none therefore has no position of its own, and the fall-backs differ only in
where it is placed: at the extreme of the range, in a null cluster, or at an infinite
threshold. Descriptors read from the classifier's weights have no such gap, which is the
difference \cref{tab:competitors} measures.

\paragraph{Conformal score functions.} Normalised and locally adaptive conformal prediction
estimate a per-instance difficulty from the features and rescale the score by it
\cite{lei2014distribution,lei2018distribution}, the per-instance analogue of a per-class correction. A parallel
line designs the score itself to trade set size against conditional coverage, including APS
\cite{romano2020classification}, RAPS \cite{angelopoulos2021uncertainty} and SAPS
\cite{huang2024conformal}, all of which appear in \hyperref[tab:depth]{Tab.~S4}.

\paragraph{Prevalence as the source of the correction.} A second family leaves the per-class
quantile alone and reshapes the score using class frequency. \citet{ding2026conformal} divide
the softmax probability by an estimate of the class prior, which is optimal for macro-coverage
among procedures of a given expected set size, and blend the classwise and marginal quantiles
with a user-chosen weight. \citet{liu2026tail} add a rank-dependent penalty to the scores of
head classes, with a smooth reweighting in their sTACP variant, and prove a smaller head-tail
coverage gap. Both remain defined for a class with no calibration examples, and both draw their
per-class signal from one scalar. Prevalence alone is significantly worse than geometry
(\cref{tab:ablation}) and is beaten by a random projection of the same dimension
(\cref{tab:competitors}).

\paragraph{Group-conditional coverage.} A third family asks for coverage conditional on group
membership instead of per class. \citet{bairaktari2025kandinsky} give the general form,
allowing overlapping and fractional groups defined jointly on covariates and labels, and
\citet{gibbs2025conditional} give conditional guarantees over any finite-dimensional class of
covariate shifts or groups. Groups
drawn in descriptor space would trade a per-class claim for a guarantee about neighbourhoods,
which we return to in \cref{sec:discussion}.

\paragraph{Conformal prediction under distribution shift.} Coverage can be restored under
covariate shift by importance-weighting the calibration scores \cite{tibshirani2019conformal},
under label shift with a known or estimated class prior \cite{podkopaev2021distribution}, in
general by weighting the calibration scores \cite{barber2023conformal}, or over time by adapting
the miscoverage level from observed errors \cite{gibbs2021adaptive}.
All need a density ratio, a prior or online label feedback, which our setting does not provide, so
we read our corrupted-input experiment as a stress test and make no coverage claim there
(\hyperref[app:imagenetc]{Sec.~L}).

\paragraph{Long-tailed recognition.} Reweighting and logit adjustment improve accuracy under
class imbalance \cite{cui2019class,cao2019learning,menon2021long}, as does retraining the
classifier head separately from the representation \cite{kang2020decoupling}, and fine-grained
biodiversity datasets have made the regime concrete
\cite{garcin2021plantnet,vanhorn2018inaturalist}. Our concern is complementary
and post-hoc: given whatever classifier such methods produce, how should its prediction sets
be calibrated for the classes that training consumed. The same separation holds for conformal
training \cite{stutz2022learning,einbinder2022training}, which differentiates through the
conformal predictor while fitting the model: the scores improve, but a class-conditional
guarantee still needs calibration examples of that class afterwards.

\section{Predicted Class Correction}
\label{sec:method}

\subsection{Setup}
\label{sec:setup}

Let $(X_i,Y_i)_{i=1}^{n}$ be calibration pairs and $(X_{n+1},Y_{n+1})$ a test pair, all
exchangeable, with labels in $\mathcal{Y}=\{1,\dots,K\}$, and use split (inductive) conformal prediction
\cite{papadopoulos2002inductive,lei2018distribution}. A conformal score
$s:\mathcal{X}\times\mathcal{Y}\to\mathbb{R}$ is larger when a pair conforms less; we use
$s(x,y)=1-\hat p(y\mid x)$, the LAC score of \citet{sadinle2019least}, and report APS
\cite{romano2020classification}, RAPS \cite{angelopoulos2021uncertainty} and SAPS
\cite{huang2024conformal} in \hyperref[tab:depth]{Tab.~S4}. Write $n_y$ for the number of calibration
examples of class $y$, and let
\begin{equation}
\hat q = \mathrm{Quantile}\Big(\{s(X_i,Y_i)\}_{i=1}^n;\ \tfrac{\lceil (n+1)(1-\alpha)\rceil}{n}\Big)
\label{eq:qglobal}
\end{equation}
be the marginal conformal quantile. For a set-valued procedure $\mathcal{C}$ we write
\begin{equation}
\begin{aligned}
\mathrm{CondCov}(\mathcal{C},y) &= \mathbb{P}\bigl(Y\in\mathcal{C}(X)\mid Y=y\bigr), \\
W(\mathcal{C}) &= \min_{y\in\mathcal{Y}} \mathrm{CondCov}(\mathcal{C},y),
\end{aligned}
\label{eq:worst}
\end{equation}
so $W$ is the worst-class coverage this paper reports. Classwise conformal prediction replaces
$\hat q$ by a per-class quantile $\hat q_y$ computed from the $n_y$ examples of class $y$;
equivalently it applies a per-class \emph{offset}
\begin{equation}
\delta_y \;=\; \hat q_y - \hat q ,
\label{eq:delta}
\end{equation}
which is positive for classes that need a looser threshold. When $n_y=0$, $\hat q_y$ and
hence $\delta_y$ are undefined.

\paragraph{Held-out classes.} We study the zero-example case directly. A fraction of classes
is designated as \emph{held out} and all of their calibration rows are deleted, so that
$n_y$ is exactly zero for them. Results are then reported in two tables that
are never averaged together, one for the classes that kept their data and one for the held-out
classes, since a method that wins on the held-out classes by sacrificing the others has not
solved the problem and a single pooled number would conceal that.

\paragraph{Choice of intervention.} A natural alternative is to improve $s(x,y)$ itself, by a
better model, better probability calibration \cite{guo2017calibration}, or aggregation over
transformed copies of the input \cite{shanmugam2021better}. Any such change reshapes the score for every class at once, while class-conditional
coverage is set by where each class's threshold falls in its own score distribution. Reducing
the spread of per-class coverage therefore needs a degree of freedom per class, which
$\delta_y$ supplies and a score-level change does not; \hyperref[app:score-level]{Sec.~O} measures this.

\paragraph{Matched set size.} Coverage can always be bought with larger sets, so comparing
coverage at unmatched set size compares nothing. Writing
$\mathcal{C}_\delta(x)=\{y: s(x,y)\le \hat q+\delta_y\}$ for the sets induced by an offset
vector $\delta\in\mathbb{R}^{K}$, every method we report is shifted by the scalar
\begin{equation}
\sigma(\delta) \;=\;
\inf\Bigl\{\,\sigma\in\mathbb{R} \;:\;
\tfrac{1}{N}\textstyle\sum_{i=1}^{N}\bigl|\mathcal{C}_{\delta+\sigma}(X_i)\bigr| \;\ge\; T \Bigr\}
\label{eq:sigma}
\end{equation}
for a common target size $T$. We call $\bar\delta = \delta + \sigma(\delta)$ the
\emph{matched} offset, and every comparison in \cref{sec:exp} is read at it. Solving
\cref{eq:sigma} on the calibration rows instead, so that no test input enters the matching,
gives the same $+0.058$ to four decimals (\hyperref[app:sigmacalib]{Sec.~V}). \Citet{dhillon2024expected} characterise the
expected size as a function of the threshold; the factor it needs to translate score measure
into label measure has no closed form for LAC, which is why \cref{eq:sigma} is solved
numerically.

\subsection{Calibration-label-free class descriptors}
\label{sec:phi}

For a linear classifier head with weight rows $w_y\in\mathbb{R}^d$ and biases $b_y$, we form a
descriptor vector $\varphi(y)\in\mathbb{R}^{p}$, $p=9$, from
\begin{enumerate}[itemsep=1pt,topsep=2pt,leftmargin=*]
\item $\|w_y\|_2$ and $b_y$, the scale and offset of the class logit;
\item cosine distances from $w_y$ to its $k$ nearest competing rows for
      $k\in\{5,10,50\}$, the distance to the nearest row, the spread of that
      neighbourhood, and the cosine distance to the mean row, all of which measure how
      crowded the class is;
\item $\log$ of the class's row count in the score dump: proportional to its
      training count on ImageNet, and its calibration-split size on the long-tailed sets.
\end{enumerate}
Row norms are known to track class frequency under imbalanced training
\cite{kang2020decoupling}, and the arrangement of the rows is the structure studied under neural
collapse \cite{papyan2020prevalence}.
The eight geometric features are functions of the model's parameters alone, so they are
defined for a class with no examples of any kind and carry no sampling noise. Prevalence is
counted from the training labels, which are available whenever the model is; the descriptors
are free of \emph{calibration} labels, though not of labels altogether. No result depends
on it: removing prevalence costs $0.011$ in worst-class coverage without reaching significance
($p=0.30$), while prevalence alone is significantly worse than the full set
(\cref{tab:ablation}). The
feature that most closely resembles the regression target is held out, so no reported number
can be attributed to it.

\subsection{The correction}
\label{sec:correction}

\paragraph{Step 1: regress.} On the classes with at least $n_{\mathrm{cal}}$ calibration
examples, a count fixed in advance, we observe $\delta_y$ from
\cref{eq:delta} and fit a ridge regression \cite{hoerl1970ridge} $g_\theta:\mathbb{R}^p\to\mathbb{R}$ to predict it
from $\varphi(y)$. There is no network and no training loop: $g_\theta$ is a closed-form fit
over at most $K$ points with $p\le 15$ features, which costs milliseconds
(\hyperref[tab:runtime]{Tab.~S19}).

\paragraph{Step 2: shrink.} The raw prediction is applied with a scalar $\lambda$,
\begin{equation}
\tilde\delta_y \;=\; \lambda\cdot g_\theta(\varphi(y)) ,
\label{eq:shrink}
\end{equation}
with $\lambda$ chosen over a fixed grid $\Lambda$ by maximising worst-class coverage at
matched size on the labelled classes only,
\begin{equation}
\hat\lambda \;=\; \operatorname*{arg\,max}_{\lambda\in\Lambda}\;
\widehat W_{\mathcal{Y}_{\mathrm{lab}}}\bigl(\mathcal{C}_{\overline{\lambda g_\theta(\varphi)}}\bigr),
\label{eq:lambda}
\end{equation}
where $\widehat W_{\mathcal{Y}_{\mathrm{lab}}}$ is the empirical version of \cref{eq:worst}
restricted to $\mathcal{Y}_{\mathrm{lab}}$. The restriction matters, because choosing
$\lambda$ with help from the held-out classes would let a free parameter manufacture the very
result the held-out table is meant to test, and it would void \cref{prop:marginal}.

Shrinkage is what makes the method usable, and \cref{prop:sup} says why;
\hyperref[fig:method]{Fig.~S1} shows the fit it is applied to. Worst-class
coverage is controlled by the \emph{largest} error in the offsets, a quantity mean squared
error does not bound, so a predictor with a respectable $R^2$ can still be harmful at full
strength: $\lambda=1$ costs $0.584$ relative to the selected $\lambda\approx0.1$, while
$\lambda=0$ recovers the marginal threshold exactly (\cref{tab:ablation}).

\begin{proposition}[Worst-class coverage is a sup-norm functional]
\label{prop:sup}
Let $F_y(t)=\mathbb{P}(s(X,Y)\le t\mid Y=y)$ and suppose each $F_y$ is $L$-Lipschitz. For any
two offset vectors $\delta,\delta'\in\mathbb{R}^{K}$ with matched versions
$\bar\delta,\bar\delta'$ from \cref{eq:sigma},
\begin{equation}
\bigl|\,W(\mathcal{C}_{\bar\delta}) - W(\mathcal{C}_{\bar\delta'})\,\bigr|
\;\le\; L\,\bigl\|\bar\delta-\bar\delta'\bigr\|_{\infty}.
\label{eq:supbound}
\end{equation}
\end{proposition}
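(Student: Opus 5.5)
The argument reduces worst-class coverage to a statement about each class separately, and then uses that a minimum of Lipschitz functions is Lipschitz in the sup-norm. The matching step plays no special role: once the matched offsets $\bar\delta,\bar\delta'$ are fixed vectors in $\mathbb{R}^K$, the claimed bound is simply the sup-norm Lipschitz property of $\delta\mapsto W(\mathcal{C}_\delta)$ evaluated at these two points.

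\emph{Step 1: per-class coverage.} Fix any offset vector $\delta$. By definition $Y\in\mathcal{C}_\delta(X)$ if and only if $s(X,Y)\le \hat q+\delta_Y$. Conditioning on $Y=y$ therefore gives
\[
\mathrm{CondCov}(\mathcal{C}_\delta,y)=F_y(\hat q+\delta_y).
\]
Two remarks make this step legitimate.

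\begin{itemize}
\item I treat the score function and $\hat q$ as fixed, i.e.\ conditional on the calibration split, so that $F_y$ is a deterministic function. If $\hat q$ is random, I apply the argument conditionally on the calibration data. Since $\hat q$ enters both sides identically, the bound holds pointwise and then survives taking expectations.
\item The offset $\delta$, and hence $\bar\delta$, may depend on the calibration inputs, and also on the test inputs through $\sigma$ in \cref{eq:sigma}. I will state explicitly that $W$ is read with the matched offsets held fixed, which is the reading under which the proposition is meaningful.
\end{itemize}

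\emph{Step 2: per-class Lipschitz bound.} Because each $F_y$ is $L$-Lipschitz,
\[
\bigl|F_y(\hat q+\bar\delta_y)-F_y(\hat q+\bar\delta'_y)\bigr|\le L\,|\bar\delta_y-\bar\delta'_y|\le L\,\|\bar\delta-\bar\delta'\|_\infty
\]
for every $y$.

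\emph{Step 3: pass to the minimum.} I use the elementary inequality
\[
\Bigl|\min_y a_y-\min_y b_y\Bigr|\le \max_y|a_y-b_y|.
\]
Its proof is short. Let $y^\ast$ attain $\min_y b_y$. Then $\min_y a_y\le a_{y^\ast}\le b_{y^\ast}+\max_y|a_y-b_y|$, and the reverse inequality follows by symmetry. Applying this with $a_y=F_y(\hat q+\bar\delta_y)$ and $b_y=F_y(\hat q+\bar\delta'_y)$ yields \cref{eq:supbound}.

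\emph{Main obstacle.} The only delicate point is interpretive rather than technical: Step 1 must hold with $\hat q$ and $\bar\delta$ treated as non-random, or at least fixed jointly, when the probability defining $\mathrm{CondCov}$ is taken. I will fix this in the first line of the proof.

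Finally, I will note why the bound is the one the paper wants. It involves only the largest coordinate gap $\|\bar\delta-\bar\delta'\|_\infty$, so an $\ell_2$ or mean-squared control of the offset errors does not bound the change in $W$. This is the sense in which $W$ is a sup-norm functional.
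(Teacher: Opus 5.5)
Your proof is correct and follows the paper's argument. Both rest on the identity $\mathrm{CondCov}(\mathcal{C}_{\bar\delta},y)=F_y(\hat q+\bar\delta_y)$, apply the per-class Lipschitz bound, and then use the fact that two functions of $y$ which agree pointwise within $\varepsilon$ have minima within $\varepsilon$; your remarks on holding $\hat q$ and the matched offsets fixed, and your observation that the matching plays no role in the bound, make explicit what the paper leaves implicit.
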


\begin{proof}
$\mathrm{CondCov}(\mathcal{C}_{\bar\delta},y)=F_y(\hat q+\bar\delta_y)$, so by Lipschitzness every $y$ obeys
\begin{equation*}
|F_y(\hat q+\bar\delta_y)-F_y(\hat q+\bar\delta'_y)|
\;\le\; L\,\|\bar\delta-\bar\delta'\|_\infty .
\end{equation*} Two functions of $y$ that agree
pointwise to within $\varepsilon$ have minima that agree to within $\varepsilon$, and
\cref{eq:worst} is such a minimum.
\end{proof}

Taking $\delta'$ to be the oracle offsets bounds our worst-class coverage below by the
oracle's, less $L$ times the sup-norm error. The bound is in the sup norm, and a ridge fit
controls a mean, so a single badly predicted class can cost as much as many well predicted ones
save. Shrinking controls that sup norm, but the sup norm is not what selects $\hat\lambda$:
swept over the grid it is smallest at $\lambda=0.7$ while worst-class coverage peaks at
$0.1$, and the bound is two orders of magnitude looser than the gap it bounds (\hyperref[app:lipschitz]{Sec.~Z}). What
$\hat\lambda$ limits instead is the global downshift \cref{eq:sigma} must apply to pay for a
wider correction, which reaches $-0.24$ at $\lambda=1$ and which this bound does not see.

\paragraph{Accuracy required of $g_\theta$.} We have no theory saying when $\varphi$ predicts
$\delta$ well, and the method works exactly as far as the regression does; a quadratic or
kernel head does not change that (\hyperref[app:heads]{Sec.~S}). Across our three
datasets the two line up monotonically. The cross-validated error of $g_\theta$ is $0.036$ on
ImageNet, $0.072$ on Pl@ntNet-300K and $0.127$ on iNaturalist-2018, against worst-class
improvements of $0.058$, $0.019$ and $0.000$. Because \cref{eq:lambda} can return $\hat\lambda=0$, a poor regression is
abstained from, which is what happens on iNaturalist-2018 in every split (\cref{sec:depth}).

\paragraph{Step 3: recalibrate.} Finally one scalar is refit, on labelled-class rows only, so
that marginal coverage returns to $1-\alpha$. Write $\tilde s(x,y)=s(x,y)-\tilde\delta_y$ for
the corrected score and let
\begin{equation}
c \;=\; \mathrm{Quantile}\Bigl(\{\tilde s(X_i,Y_i)\}_{i\in\mathcal{I}_{\mathrm{rec}}};\
\tfrac{\lceil (n_{\mathrm{rec}}+1)(1-\alpha)\rceil}{n_{\mathrm{rec}}}\Bigr) \;-\; \hat q ,
\label{eq:c}
\end{equation}
a \emph{residual} relative to $\hat q$. The prediction set is
\begin{equation}
\mathcal{C}(x) \;=\; \bigl\{\, y : s(x,y) \le \hat q + \tilde\delta_y + c \,\bigr\},
\label{eq:set}
\end{equation}
and $\hat q + c$ is by construction the conformal quantile of $\tilde s$ on
$\mathcal{I}_{\mathrm{rec}}$, so \cref{eq:set} is exactly the split conformal set for
$\tilde s$, with $\hat q$ entering the threshold once.

\begin{algorithm}[t]
\small
\caption{Predicted Class Correction}
\label{alg:pcc}
\begin{algorithmic}[1]
\Require score $s$, level $\alpha$, target size $T$, grid $\Lambda$, count $n_{\mathrm{cal}}$
\Require disjoint calibration rows $\mathcal{I}_{\mathrm{fit}},\mathcal{I}_{\mathrm{rec}}$;
         labelled classes $\mathcal{Y}_{\mathrm{lab}}$; descriptors $\varphi$
\State $\hat q \gets$ marginal conformal quantile \eqref{eq:qglobal} on
       $\mathcal{I}_{\mathrm{fit}}$
\For{$y\in\mathcal{Y}_{\mathrm{lab}}$ with $n_y\ge n_{\mathrm{cal}}$}
  \State $\delta_y \gets \hat q_y - \hat q$ \Comment{observed offset \eqref{eq:delta}}
\EndFor
\State $g_\theta \gets$ ridge fit of $\delta$ on $\varphi$ over those classes
\State $\hat\lambda \gets$ \eqref{eq:lambda}, on $\mathcal{Y}_{\mathrm{lab}}$ only
\State $\tilde\delta_y \gets \hat\lambda\, g_\theta(\varphi(y))$ for \emph{every}
       $y\in\mathcal{Y}$ \Comment{including $n_y=0$}
\State $c \gets$ conformal quantile of
       $\{s(X_i,Y_i)-\tilde\delta_{Y_i}\}_{i\in\mathcal{I}_{\mathrm{rec}}}$, minus $\hat q$
       \Comment{residual \eqref{eq:c}}
\State \Return $\mathcal{C}(x)=\{y : s(x,y)\le \hat q+\tilde\delta_y+c\}$
\end{algorithmic}
\end{algorithm}

\subsection{Data splitting}
\label{sec:splits}

Every quantity above is estimated, so it matters which rows and which classes each one may
see. Classes are partitioned into labelled and held-out, and independently the rows of every
class into a calibration and an evaluation part; \hyperref[tab:splits]{Tab.~S12} in the supplementary states the
combination each quantity is fitted on. Two invariants hold throughout: nothing is fitted on evaluation rows,
and nothing is fitted on a held-out class. The numbers reported for held-out classes are
therefore free of selection effects from $\lambda$, $n_{\mathrm{cal}}$ and $c$, while those for
labelled classes are not, which is the second reason the two tables are kept apart.

\subsection{Coverage guarantee}
\label{sec:guarantee}

Let $\mathcal{I}_{\mathrm{fit}}$ and $\mathcal{I}_{\mathrm{rec}}$ be disjoint sets of
calibration rows, of sizes $n_{\mathrm{fit}}$ and $n_{\mathrm{rec}}$.

\begin{proposition}[Marginal coverage is preserved]
\label{prop:marginal}
Let $(X_i,Y_i)_{i\in\mathcal{I}_{\mathrm{rec}}}$ and $(X_{n+1},Y_{n+1})$ be exchangeable, and
let $\tilde\delta_\cdot$ be any function of $\varphi(\cdot)$ and of
$(X_i,Y_i)_{i\in\mathcal{I}_{\mathrm{fit}}}$. Let $c$ be given by \cref{eq:c}, so that
$\hat q + c$ is the conformal quantile at level
$\lceil (n_{\mathrm{rec}}+1)(1-\alpha)\rceil/n_{\mathrm{rec}}$ of the shifted scores
$\{\tilde s(X_i,Y_i)\}_{i\in\mathcal{I}_{\mathrm{rec}}}$. Then
$\mathbb{P}\bigl(Y_{n+1}\in\mathcal{C}(X_{n+1})\bigr)\ge 1-\alpha$.
\end{proposition}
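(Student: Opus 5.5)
The plan is to reduce the statement to the standard split conformal guarantee by conditioning on the fitting data. Let $\mathcal{D}_{\mathrm{fit}}=(X_i,Y_i)_{i\in\mathcal{I}_{\mathrm{fit}}}$. The hypothesis says $\tilde\delta_\cdot$ is a function of $\varphi(\cdot)$ and of $\mathcal{D}_{\mathrm{fit}}$ only. The descriptors $\varphi$ depend only on the model's parameters, which are fixed before calibration. Once we condition on $\mathcal{D}_{\mathrm{fit}}$, the corrected score $\tilde s(x,y)=s(x,y)-\tilde\delta_y$ is therefore a fixed, deterministic function of $(x,y)$. The same holds for $\hat q$, computed on $\mathcal{I}_{\mathrm{fit}}$, and for $\hat\lambda$ and $g_\theta$. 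The first step is to state this explicitly, and to note that $\hat\lambda$ is selected on labelled-class rows drawn from $\mathcal{I}_{\mathrm{fit}}$, never from $\mathcal{I}_{\mathrm{rec}}$ or evaluation rows.

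Second, I need exchangeability to survive conditioning. I would assume, as the setup intends, that the rows of $\mathcal{I}_{\mathrm{fit}}$ are independent of, or at least exchangeable jointly with and separable from, $(X_i,Y_i)_{i\in\mathcal{I}_{\mathrm{rec}}}\cup\{(X_{n+1},Y_{n+1})\}$. Then, conditionally on $\mathcal{D}_{\mathrm{fit}}$, the $n_{\mathrm{rec}}+1$ pairs are still exchangeable. Applying the fixed map $\tilde s$ to each pair gives exchangeable scalars $\tilde S_i=\tilde s(X_i,Y_i)$, $i\in\mathcal{I}_{\mathrm{rec}}\cup\{n+1\}$.

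Third, rewrite the event. By \cref{eq:set},
\[
Y_{n+1}\in\mathcal{C}(X_{n+1}) \iff s(X_{n+1},Y_{n+1})-\tilde\delta_{Y_{n+1}}\le \hat q+c \iff \tilde S_{n+1}\le \hat q+c .
\]
By \cref{eq:c}, $\hat q+c$ is exactly the empirical quantile of $\{\tilde S_i\}_{i\in\mathcal{I}_{\mathrm{rec}}}$ at level $\lceil(n_{\mathrm{rec}}+1)(1-\alpha)\rceil/n_{\mathrm{rec}}$; the $\hat q$ cancels. The standard quantile lemma then applies: for exchangeable scalars, the rank of $\tilde S_{n+1}$ among all $n_{\mathrm{rec}}+1$ values is stochastically no larger than uniform on $\{1,\dots,n_{\mathrm{rec}}+1\}$, with ties handled conservatively. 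This gives $\mathbb{P}(\tilde S_{n+1}\le \hat q+c\mid\mathcal{D}_{\mathrm{fit}})\ge \lceil(n_{\mathrm{rec}}+1)(1-\alpha)\rceil/(n_{\mathrm{rec}}+1)\ge 1-\alpha$. Taking expectation over $\mathcal{D}_{\mathrm{fit}}$ finishes the proof. If $\lceil(n_{\mathrm{rec}}+1)(1-\alpha)\rceil>n_{\mathrm{rec}}$, the quantile is $+\infty$ and coverage is trivial.

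The main obstacle is the conditioning step, not the quantile argument. The statement only posits exchangeability of the recalibration rows with the test pair. To treat $\tilde s$ as fixed, I must make sure nothing in $\tilde\delta$, $\hat q$ or $\hat\lambda$ touches $\mathcal{I}_{\mathrm{rec}}$ or the test point; this is where the restriction of \cref{eq:lambda} to $\mathcal{Y}_{\mathrm{lab}}$ fitting rows matters. I would also make explicit that $\mathcal{D}_{\mathrm{fit}}$ is independent of the remaining rows. A second subtlety is the scope of the claim. Because $\mathcal{I}_{\mathrm{rec}}$ contains only labelled-class rows, exchangeability with the test pair holds for a test label drawn from $\mathcal{Y}_{\mathrm{lab}}$. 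The guarantee is therefore marginal over the label space it is calibrated on, and I would record that caveat right after the proof.
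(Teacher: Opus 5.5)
Your proposal is correct and follows the paper's proof essentially step for step. You condition on $\mathcal{I}_{\mathrm{fit}}$ so that $\tilde s$ is a fixed map, carry exchangeability through it, observe that $\hat q$ cancels so membership in $\mathcal{C}$ is exactly the event $\tilde S_{n+1}\le \hat q+c$, apply the standard split conformal quantile bound, and average over the fitting rows. You also make explicit something the paper's proof uses without stating: the recalibration rows and the test pair must stay exchangeable \emph{conditionally} on $\mathcal{I}_{\mathrm{fit}}$. This is genuinely needed, because the proposition's hypotheses alone do not supply it; it follows from the joint exchangeability of all $n+1$ pairs assumed in the setup.
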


\begin{proof}[Proof sketch]
Define $\tilde s(x,y)=s(x,y)-\tilde\delta_y$. Since $\tilde\delta_\cdot$ is a function of
$\varphi(\cdot)$ alone, and $\varphi$ is computed from the model's weights and does not involve
these calibration rows, $\tilde s$ is a \emph{fixed} score function with respect to the
sample. Exchangeability is closed under applying a fixed map pointwise, so
$\tilde s(X_1,Y_1),\dots,\tilde s(X_{n+1},Y_{n+1})$ are exchangeable and the standard split
conformal argument applies verbatim to $\tilde s$. By \cref{eq:c} the threshold
$\hat q + c$ \emph{is} that argument's quantile, so membership in $\mathcal{C}$ is exactly the
event it bounds. Full proof in \hyperref[app:proof]{Sec.~A}.
\end{proof}

The proposition holds because only \emph{one} degree of freedom is calibrated on
$\mathcal{I}_{\mathrm{rec}}$; $\tilde\delta_y$ may be arbitrarily complex, since it merely
redefines the score.

\paragraph{Reuse of calibration rows.} The premise names two disjoint row sets.
\Cref{tab:competitors} provides them, since \textsc{Pcc}-split holds a quarter of the
calibration rows back for $c$ alone; the sweeps that follow do not, so there $\tilde\delta$ and
$c$ are dependent and the proposition does not apply to them as stated. The cost is measured: at
the thresholds as emitted over ten ImageNet splits, \textsc{Pcc} covers $0.8994$ where the
marginal threshold covers $0.9012$ on the same rows, a deficit of $0.0018$ and roughly twenty
times the finite-sample slack $1/(n+1)$. The split variant removes it, and at this protocol the
two agree on worst-class coverage to four decimals (\hyperref[app:split]{Sec.~P}).

\paragraph{Scope of the claim.} \Cref{prop:marginal} covers the distribution the
recalibration rows are drawn from. Since $c$ is fit on labelled-class rows, it does not reach a
test point whose class has $n_y=0$, and no procedure can make it: such a class contributes no
row to calibrate with. What we claim there is empirical, and $\lambda$ and $n_{\mathrm{cal}}$
are chosen on the labelled label space, which is why the two tables are kept apart (\hyperref[app:scope]{Sec.~B}).

\section{Experiments}
\label{sec:exp}

\paragraph{Protocol.} Unless stated otherwise we use ImageNet-1k
\cite{russakovsky2015imagenet} scores from a SimCLRv2 encoder with a linear probe
\cite{chen2020big}, descriptors from a supervised ResNet-50 head \cite{he2016deep} so that
$\varphi$ is exogenous to the scoring model, $\alpha=0.10$, $30\%$ of classes held out with
all of their calibration rows deleted, $n_{\mathrm{cal}}=25$, and five random splits.
Intervals are $95\%$ confidence intervals over splits, and every comparison is at matched
average set size (\cref{sec:setup}).

\paragraph{Metrics.} \hyperref[tab:metrics]{Tab.~S2} in the supplementary defines every quantity. The headline
statistic is worst-class coverage, since one badly covered class is what a class-conditional
guarantee is meant to prevent. It can only be estimated when classes have enough evaluation
examples, so we fix a threshold of $30$ per class in advance: above it we report per-class
worst coverage (regime~A), below it a prevalence-binned surrogate (regime~B). Every result
states its regime, and \cref{sec:depth} shows why that matters.

\paragraph{Oracle ceiling.} To give the numbers a scale we report an oracle that observes the
test labels, forms the exact per-class offset, then selects its own shrinkage. It is
unattainable and is not a method; it says how much of the available room a method took.
Percentages of it are ratios of averages: the ceiling moves by $0.10$ between splits, so the
fraction taken ranges from $2\%$ to $75\%$ (\hyperref[app:oracle]{Sec.~F}).

% ---------------------------------------------------------------- competitors
\begin{table}[t]
\centering
\small
\setlength{\tabcolsep}{4pt}
\begin{tabular}{lcc}
\toprule
Method & $\Delta$ worst-class & Undefined \\
\midrule
\textsc{Standard}        & $+0.000\ [\ \ 0.000,\ 0.000]$ & $0/300$ \\
\textsc{Classwise}       & $+0.000\ [\ \ 0.000,\ 0.000]$ & $\mathbf{300/300}$ \\
\textsc{Clustered}       & $+0.000\ [\ \ 0.000,\ 0.000]$ & $0/300$ \\
\textsc{Rc3p}            & $+0.000\ [\ \ 0.000,\ 0.000]$ & $\mathbf{300/300}$ \\
\textsc{Fuzzy}, quantile & $+0.000\ [\ \ 0.000,\ 0.000]$ & $0/300$ \\
\textsc{Fuzzy}, rarity   & $+0.000\ [-0.000,+0.001]$ & $0/300$ \\
\textsc{Fuzzy}, random   & $+0.016\ [-0.009,+0.040]$ & $0/300$ \\
\textsc{Interp-Q}        & $+0.000\ [\ \ 0.000,\ 0.000]$ & $0/300$ \\
\midrule
\textbf{PCC (ours)} & $\mathbf{+0.058}\ [+0.006,+0.110]$ & $\mathbf{0/300}$ \\
\midrule
\textit{Oracle ceiling} & \textit{+0.140} & --- \\
\bottomrule
\end{tabular}
\caption{Classes with zero calibration examples. \textsc{Standard}
\cite{vovk2005algorithmic}, \textsc{Classwise} \cite{sadinle2019least}, \textsc{Clustered}
\cite{ding2023class}, \textsc{Rc3p} \cite{shi2024rank}, and \textsc{Fuzzy} and
\textsc{Interp-Q} \cite{ding2026conformal}. Change in worst-class coverage over the marginal
threshold, at matched average set size ($1.221$ for every row), on $300$ held-out ImageNet
classes, five splits at the protocol of \cref{sec:exp}. PCC is in the split variant of
\cref{sec:guarantee}, so the row carrying the headline is the one \cref{prop:marginal} covers;
the reuse variant gives the same figure to four decimals (\hyperref[app:split]{Sec.~P}), as does solving the size match on calibration rows rather than evaluation rows (\hyperref[app:sigmacalib]{Sec.~V}). PCC is the only method whose
interval excludes zero. ``Undefined'' counts classes a method could not give a threshold to,
before the pre-registered fallback. Baseline numbers come from their released code, and for
\textsc{Fuzzy} CP the kernel bandwidth is swept with the best kept, which favours the
baseline; the winning bandwidth is not stable across splits.}
\label{tab:competitors}
\end{table}

\subsection{Classes with no calibration examples}
\label{sec:comp}

PCC improves worst-class coverage on the held-out classes by $0.058$ $[0.006,0.110]$, $42\%$ of
the oracle ceiling, and is the only entry of nine whose interval excludes zero
(\cref{tab:competitors}): at this protocol the question is not which method gains most but
which gains anything. A group-conditional baseline on the same descriptors --- $k$-means
on $\varphi$, one conformal quantile per cluster --- reaches $+0.054$ $[-0.004,+0.112]$, with a
paired difference of $+0.004$ $[-0.014,+0.022]$: what the regression buys over a grouping is
unresolved at this depth (\hyperref[app:methods]{Sec.~C}). Percentile and CVaR summaries of the same tail are in \hyperref[app:tail]{Sec.~Y}.

Two of the zeros are not measurements: \textsc{Classwise} CP and \textsc{Rc3p} return an
infinite threshold for all $300$ classes, so their entry records the pre-registered fallback.
The other six are measurements, and \textsc{Interp-Q} is the instructive one: it caps an
undefined classwise quantile at one, so a held-out class receives the widest threshold in the
label space and the others pay for it once size is matched back down. Swept over its
interpolation weight, it selects zero on every split, at which point it is \textsc{Standard} CP.

Prevalence is a weak signal here. \textsc{Fuzzy} CP reaches $0.000$ with its rarity
projection and $0.016$ with a \emph{random} one, so widening thresholds for data-poor classes in
an arbitrary direction already buys more than prevalence does; our own ablation agrees, since
prevalence-only descriptors drive $\lambda$ to zero in all ten splits (\cref{tab:ablation}).
The descriptors of \cref{sec:phi} buy $3.6$ times what a random direction buys.

The gain is paid for. On the first split the correction lifts the worst class from $0.658$ to
$0.691$ and moves macro-coverage from $0.904$ to $0.900$, but because set size is fixed,
$11\%$ of classes gain coverage while $43\%$ lose a mean of $0.004$
(\hyperref[fig:coverage]{Fig.~S2} and \hyperref[tab:whohelped]{Tab.~S10}).

\subsection{Ablations}
\label{sec:ablations}

\begin{table}[t]
\centering
\small
\setlength{\tabcolsep}{4pt}
\begin{tabular}{lccc}
\toprule
Ablation & $\Delta$ vs.\ full & $d_z$ & Holm \\
\midrule
$\lambda=0$ (no correction)      & $+0.032$ & $1.30$ & \checkmark \\
$\lambda=1$ (unshrunk)           & $+0.584$ & $3.99$ & \checkmark \\
distance descriptors only        & $+0.039$ & $1.49$ & \checkmark \\
prevalence descriptor only       & $+0.032$ & $1.30$ & \checkmark \\
without prevalence               & $+0.011$ & $0.33$ & --- \\
$n^\star$ rule replaced          & $+0.000$ & --- & --- \\
no marginal recalibration        & $+0.000$ & --- & --- \\
\bottomrule
\end{tabular}
\caption{Ablations on ImageNet at $\alpha=0.10$, ten splits. Positive means the full method is
better. $d_z$ is Cohen's paired effect size \cite{cohen1988statistical}; the last column marks Holm-corrected significance
at $0.05$ \cite{holm1979simple}. The last two rows are identical to the full method at matched
set size by construction (\hyperref[app:invisible]{Sec.~I}).}
\label{tab:ablation}
\end{table}

Shrinkage carries the method. Both endpoints of the $\lambda$ grid are significantly worse
than the selected interior value, and the unshrunk correction costs $0.584$ in worst-class
coverage (\cref{tab:ablation}). Among descriptors, distances alone and prevalence alone are
both significantly worse than the full set, while removing prevalence is not ($p=0.30$), so
the geometry carries the signal.

The last two rows are exactly zero because a constant added to every threshold is cancelled
exactly by the size-matching shift, and marginal recalibration is such a constant. At the thresholds as emitted it moves
marginal coverage from $0.8963$ to $0.8989$ against a target of $0.900$, at a cost of $0.31$
in set size. \hyperref[app:invisible]{Sec.~I} gives that view in full.

% ---------------------------------------------------------------- sweeps
\begin{figure*}[t]
\centering
\includegraphics[width=\linewidth]{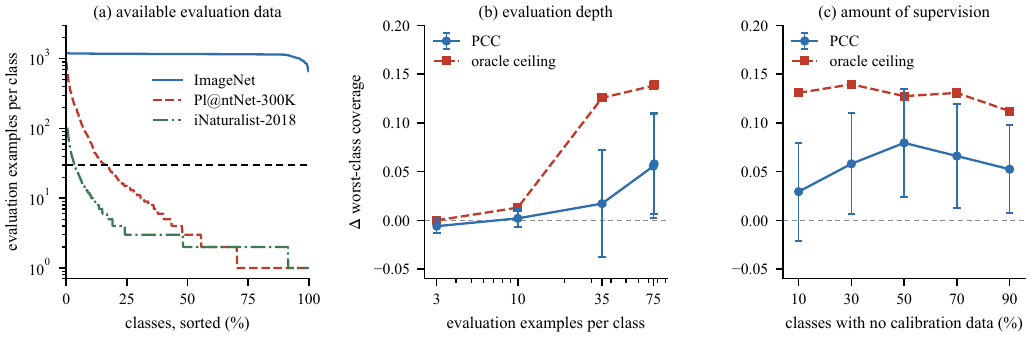}
\caption{(a) Evaluation examples available per class, sorted, for the three datasets. The
dashed line is the $30$-example threshold below which a per-class coverage estimate is withheld
(\hyperref[tab:metrics]{Tab.~S2}); $83\%$ of Pl@ntNet-300K classes and $97\%$ of iNaturalist-2018 classes
fall below it, against none of ImageNet's. (b) Capping evaluation examples per class inside one
dataset with everything else fixed: below roughly $35$ the oracle ceiling itself falls to zero,
so no method, including one that observes the test labels, can be shown to improve worst-class
coverage. The settings where PCC appeared to fail sit at $35$ (additional backbones), $3$
(Pl@ntNet-300K) and $2$ (iNaturalist-2018). (c) Sweeping how many classes have no calibration
data at all; the $x$-values correspond to $900$, $700$, $500$, $300$ and $100$ labelled classes
out of $1000$. Bars are $95\%$ confidence intervals over five splits.}
\label{fig:sweeps}
\end{figure*}

\subsection{The role of evaluation depth}
\label{sec:depth}

Applied without modification, PCC fails on two long-tailed datasets and two additional
backbones. All four lie on one axis, with a second cause beneath it for one, and finding the
axis meant discarding two explanations we had held. Calibration depth is not the cause. Capping calibration examples inside ImageNet
gives a dose-response from $0.033$ at ten per class to $0.060$ at one hundred, with
every interval excluding zero (\hyperref[tab:depth]{Tab.~S4}), and ten per class is far thinner than the
settings that fail. Descriptors endogenous to the scoring model are not the cause either:
replacing each backbone's own head with a fixed exogenous one leaves every interval containing
zero (\hyperref[app:backbones]{Sec.~J}).

How much evaluation data each dataset actually has is \cref{fig:sweeps}a. On ImageNet every
class carries about $1168$ examples; on Pl@ntNet-300K the median class has three and on
iNaturalist-2018 two, so $83\%$ and $97\%$ of their classes respectively fall below the
threshold at which we report a per-class estimate at all. \Cref{fig:sweeps}b caps the number of
\emph{evaluation} examples per class inside ImageNet instead. The
effect is monotone and its interval crosses zero between $35$ and $75$. More importantly the
oracle ceiling collapses on the same axis, reaching zero by three examples per class, where
even a method holding the test labels could not improve worst-class coverage. The axis
predicts the four failures quantitatively: the additional backbones sit at $35$ and measure
$0.017$, which is what the sweep gives at $35$; Pl@ntNet-300K sits at three and iNaturalist-2018 at two, where the sweep
is at or below zero.

That explanation is falsifiable, and restricting evaluation to classes above the threshold
confirms it on Pl@ntNet-300K. Of its $1081$ classes, $98$ have at least $75$ evaluation
examples and $34$ of those are held out; over that label space the improvement is $0.019$
$[0.010,0.028]$, five splits of five. At the weaker threshold of $35$ it does not reach
significance, $0.020$ $[-0.011,0.052]$, which is what \cref{fig:sweeps}b predicts at the edge
of measurability. The apparent failure was an artefact of measurement, and the price of
removing it is that the claim covers under a tenth of the label space (\hyperref[tab:longtail]{Tab.~S7}).

iNaturalist-2018 stays at zero even after restriction, and there the cause is different.
$\lambda$ is selected as $0$ in every split, so the method returns the marginal threshold
untouched. The shrinkage curve is not flat, which would mean $\hat\delta$ is degenerate; it
decreases monotonically, so $\hat\delta$ carries signal that points the wrong way. The
regression is correspondingly poor, with cross-validated error $0.127$ against $0.036$ on
ImageNet. Eight times as many classes in a descriptor space of the same dimension is the
obvious explanation and richer descriptors the obvious remedy, but we have tested neither
(\hyperref[app:longtail]{Sec.~K}). What shrinkage delivers here is that a $3.5\times$ worse regression
costs nothing, because the method detects that it cannot help and abstains.

% ---------------------------------------------------------------- held-out

\subsection{Amount of supervision required}
\label{sec:supervision}

Sweeping the held-out fraction from $10\%$ to $90\%$, the effect peaks at $50\%$ with $0.080$,
or $62\%$ of the ceiling, and remains significant at $90\%$, where only $100$ classes retain
labels, at $0.052$ $[0.007,0.098]$ (\cref{fig:sweeps}b). Four of the five points exclude zero;
the exception is $10\%$, where only $100$ classes are held out so the worst-class statistic
over them is noisier and the room is smaller. We refuse to fit $g_\theta$ on fewer than $p+2$
labelled classes, eleven here. Ridge is defined below that, since the penalty makes the system
solvable; we decline it because there the fit follows the penalty rather than the data.

\subsection{Score functions and miscoverage level}
\label{sec:scores}

The correction survives every score but is strongest with LAC and attenuates for the adaptive
ones (\hyperref[tab:depth]{Tab.~S4}), plausibly because APS, RAPS and SAPS already reshape the score using
rank or cumulative mass, absorbing part of the class-difficulty signal $\delta_y$ captures. At
$\alpha=0.01$ the effect is exactly zero, consistent with the offsets becoming
indistinguishable from noise in the extreme tail.

PAS \cite{ding2026conformal} appears because it is the other published construction that stays
defined at $n_y=0$. It does not remove the need for a per-class correction, though the reading
is not a clean win either: our improvement on PAS scores is $0.066$ with an interval containing
zero, while the ceiling rises to $0.234$, so PAS opens more room than LAC and we take a smaller
share of it.

\paragraph{Corrupted inputs.}
Calibrating on clean images and evaluating on ImageNet-C \cite{hendrycks2019benchmarking},
over $15$ corruptions at two severities, breaks exchangeability, so we make no coverage claim
here and neither can any method compared. At contrast severity $5$, marginal coverage is
$0.053$ and $75.7\%$ of test points receive an empty set. Pooled over the $90$ conditions PCC
gives $0.003$ $[0.001,0.005]$, an order of magnitude below the clean setting
(\hyperref[app:imagenetc]{Sec.~L}).

\paragraph{Computational cost.} PCC trains nothing: a closed-form ridge fit over at most $K$
points with $p=9$ features, on descriptors read from the weights, so no images, gradients or
GPU are involved (\hyperref[app:runtime]{Sec.~U}).

\section{Discussion}
\label{sec:discussion}

A per-class conformal correction can be predicted from calibration-label-free descriptors and
applied to classes with no calibration examples at all. On ImageNet it takes $42\%$ of an
oracle ceiling there and is the only one of nine methods whose improvement is distinguishable
from none, two of the eight baselines being undefined for all such classes, and it needs labels for as few as
a tenth of the classes.
Marginal coverage is preserved because a single scalar is calibrated
(\cref{prop:marginal}).

\paragraph{Limitations.} The claim at $n_y=0$ is empirical, for the reason
\cref{sec:correction} gives. Our zero-calibration classes are made by deleting the
calibration rows of well-represented ones, so the head row supplying $\varphi(y)$ was fitted
with their training data, and the improvement rests entirely on that row: perturbed outward by
the magnitude Pl@ntNet-300K's rare classes actually exhibit, it falls to $-0.014$ (\hyperref[app:noise]{Sec.~T}).
Real rare rows do differ measurably, though neither long-tailed outcome tracks that
difference. And $\varphi$ reads a linear classifier head, which a model
scoring by embedding similarity does not have. The improvement attenuates for adaptive
scores and disappears at $\alpha=0.01$. Transfer across architectures remains
untested, since the additional backbones leave $35$ evaluation examples per class, which
\cref{fig:sweeps}a places below the depth at which anything is detectable.
Five splits is what ImageNet's validation set supports at this depth, and a grouping of the
same descriptors reaches $+0.054$, which this evaluation cannot separate from PCC (\hyperref[app:methods]{Sec.~C}).
The depth at which the ceiling collapses was measured on one dataset at one $\alpha$. Finally, every table uses a
random per-class subsample of the score dump, adopted for memory reasons
(\hyperref[app:rowbudget]{Sec.~Q}); sweeping it over a factor of three moves the improvement by less than
the interval of any single setting, so the headline is not an artefact of it, but we cannot
say what happens on the full dump.

\paragraph{Evaluating class-conditional coverage.} The observation we expect to outlast this
method is that worst-class coverage becomes unmeasurable before it becomes unachievable, in
the sense that the evaluation set runs out of power before the method runs out of room. Below
roughly $35$ evaluation examples per class the oracle ceiling is zero, so a null result on such
a slice says little about the method that produced it, and \cref{fig:sweeps}a shows how much of
each long-tailed benchmark sits there. Published comparisons on them may report a property of
the evaluation set where they intend a difference between methods. We state the measurability
regime beside every number, and three of our own explanations were discarded before we
understood this (\cref{sec:depth}).

\paragraph{Future work.} The most direct next step is a larger evaluation set for the rare
classes themselves, since the restriction in \hyperref[tab:longtail]{Tab.~S7} keeps under a tenth of the
label space. A second question is why $\varphi$ carries signal on Pl@ntNet-300K and not on
iNaturalist-2018; both fit $g_\theta$ from under eighty measurable classes, so the label
space is not the axis (Secs.~\hyperref[app:longtail]{K} and~\hyperref[app:heads]{S}). The
most interesting direction is a formal statement to replace the empirical one. No procedure can
bound coverage for an individual class with $n_y=0$, but group-conditional conformal prediction
can bound it for a neighbourhood of classes, and the framework of
\citet{bairaktari2025kandinsky} accepts overlapping, fractional groups defined jointly on
covariates and labels, which is the shape our descriptors have. Groups drawn in
$\varphi$-space would give a guarantee about regions of the classifier's output geometry:
weaker than a per-class statement, and much stronger than none.

{
    \small
    \bibliographystyle{ieeenat_fullname}
    \bibliography{main}

@String(PAMI = {IEEE Trans. Pattern Anal. Mach. Intell.})

@String(IJCV = {Int. J. Comput. Vis.})

@String(CVPR= {IEEE Conf. Comput. Vis. Pattern Recog.})

@String(ICCV= {Int. Conf. Comput. Vis.})

@String(NIPS= {Adv. Neural Inform. Process. Syst.})

@String(ICLR = {Int. Conf. Learn. Represent.})

@String(AAAI = {AAAI})

@String(PAMI  = {IEEE TPAMI})

@String(IJCV  = {IJCV})

@String(CVPR  = {CVPR})

@String(ICCV  = {ICCV})

@String(NIPS  = {NeurIPS})

@String(ICLR  = {ICLR})

@book{vovk2005algorithmic,
  title     = {Algorithmic Learning in a Random World},
  author    = {Vovk, Vladimir and Gammerman, Alexander and Shafer, Glenn},
  publisher = {Springer},
  address   = {New York},
  year      = {2005},
  doi       = {10.1007/b106715},
}

@article{angelopoulos2023gentle,
  title   = {Conformal Prediction: A Gentle Introduction},
  author  = {Angelopoulos, Anastasios N. and Bates, Stephen},
  journal = {Foundations and Trends in Machine Learning},
  volume  = {16},
  number  = {4},
  pages   = {494--591},
  year    = {2023},
  doi     = {10.1561/2200000101},
}

@article{sadinle2019least,
  title   = {Least Ambiguous Set-Valued Classifiers with Bounded Error Levels},
  author  = {Sadinle, Mauricio and Lei, Jing and Wasserman, Larry},
  journal = {Journal of the American Statistical Association},
  volume  = {114},
  number  = {525},
  pages   = {223--234},
  year    = {2019},
  doi     = {10.1080/01621459.2017.1395341},
  note    = {arXiv:1609.00451},
}

@article{lei2014distribution,
  title   = {Distribution-Free Prediction Bands for Non-parametric Regression},
  author  = {Lei, Jing and Wasserman, Larry},
  journal = {Journal of the Royal Statistical Society: Series B},
  volume  = {76},
  number  = {1},
  pages   = {71--96},
  year    = {2014},
  doi     = {10.1111/rssb.12021},
}

@inproceedings{romano2020classification,
  title     = {Classification with Valid and Adaptive Coverage},
  author    = {Romano, Yaniv and Sesia, Matteo and Candès, Emmanuel J.},
  booktitle = NIPS,
  volume    = {33},
  pages     = {3581--3591},
  year      = {2020},
}

@inproceedings{angelopoulos2021uncertainty,
  title     = {Uncertainty Sets for Image Classifiers Using Conformal Prediction},
  author    = {Angelopoulos, Anastasios N. and Bates, Stephen and Malik, Jitendra and Jordan, Michael I.},
  booktitle = ICLR,
  year      = {2021},
  note      = {arXiv:2009.14193},
}

@inproceedings{huang2024conformal,
  title     = {Conformal Prediction for Deep Classifier via Label Ranking},
  author    = {Huang, Jianguo and Xi, Huajun and Zhang, Linjun and Yao, Huaxiu and Qiu, Yue and Wei, Hongxin},
  booktitle = {Int. Conf. Mach. Learn.},
  volume    = {235},
  series    = {Proceedings of Machine Learning Research},
  pages     = {20331--20347},
  year      = {2024},
  note      = {arXiv:2310.06430},
}

@inproceedings{ding2023class,
  title     = {Class-Conditional Conformal Prediction with Many Classes},
  author    = {Ding, Tiffany and Angelopoulos, Anastasios N. and Bates, Stephen and Jordan, Michael I. and Tibshirani, Ryan J.},
  booktitle = NIPS,
  volume    = {36},
  pages     = {64555--64576},
  year      = {2023},
  note      = {arXiv:2306.09335},
}

@inproceedings{ding2026conformal,
  title     = {Conformal Prediction for Long-Tailed Classification},
  author    = {Ding, Tiffany and Fermanian, Jean-Baptiste and Salmon, Joseph},
  booktitle = ICLR,
  year      = {2026},
  note      = {arXiv:2507.06867},
}

@inproceedings{shi2024rank,
  title     = {Conformal Prediction for Class-wise Coverage via Augmented Label Rank Calibration},
  author    = {Shi, Yuanjie and Ghosh, Subhankar and Belkhouja, Taha and Doppa, Janardhan Rao and Yan, Yan},
  booktitle = NIPS,
  volume    = {37},
  year      = {2024},
  note      = {arXiv:2406.06818},
}

@inproceedings{tibshirani2019conformal,
  title     = {Conformal Prediction Under Covariate Shift},
  author    = {Tibshirani, Ryan J. and Barber, Rina Foygel and Candes, Emmanuel J. and Ramdas, Aaditya},
  booktitle = NIPS,
  volume    = {32},
  pages     = {2530--2540},
  year      = {2019},
}

@inproceedings{gibbs2021adaptive,
  title     = {Adaptive Conformal Inference Under Distribution Shift},
  author    = {Gibbs, Isaac and Candes, Emmanuel J.},
  booktitle = NIPS,
  volume    = {34},
  pages     = {1660--1672},
  year      = {2021},
}

@inproceedings{garcin2021plantnet,
  title     = {Pl@ntNet-300K: A Plant Image Dataset with High Label Ambiguity and a Long-Tailed Distribution},
  author    = {Garcin, Camille and Joly, Alexis and Bonnet, Pierre and Affouard, Antoine and Lombardo, Jean-Christophe and Chouet, Mathias and Servajean, Maximilien and Lorieul, Titouan and Salmon, Joseph},
  booktitle = {NeurIPS Datasets and Benchmarks Track},
  year      = {2021},
}

@inproceedings{vanhorn2018inaturalist,
  title     = {The iNaturalist Species Classification and Detection Dataset},
  author    = {Van Horn, Grant and Mac Aodha, Oisin and Song, Yang and Cui, Yin and Sun, Chen and Shepard, Alex and Adam, Hartwig and Perona, Pietro and Belongie, Serge},
  booktitle = CVPR,
  pages     = {8769--8778},
  year      = {2018},
}

@article{russakovsky2015imagenet,
  title   = {ImageNet Large Scale Visual Recognition Challenge},
  author  = {Russakovsky, Olga and Deng, Jia and Su, Hao and Krause, Jonathan and Satheesh, Sanjeev and Ma, Sean and Huang, Zhiheng and Karpathy, Andrej and Khosla, Aditya and Bernstein, Michael and Berg, Alexander C. and Fei-Fei, Li},
  journal = IJCV,
  volume  = {115},
  number  = {3},
  pages   = {211--252},
  year    = {2015},
  doi     = {10.1007/s11263-015-0816-y},
}

@inproceedings{hendrycks2019benchmarking,
  title     = {Benchmarking Neural Network Robustness to Common Corruptions and Perturbations},
  author    = {Hendrycks, Dan and Dietterich, Thomas},
  booktitle = ICLR,
  year      = {2019},
}

@inproceedings{he2016deep,
  title     = {Deep Residual Learning for Image Recognition},
  author    = {He, Kaiming and Zhang, Xiangyu and Ren, Shaoqing and Sun, Jian},
  booktitle = CVPR,
  pages     = {770--778},
  year      = {2016},
}

@inproceedings{dosovitskiy2021image,
  title     = {An Image is Worth 16x16 Words: Transformers for Image Recognition at Scale},
  author    = {Dosovitskiy, Alexey and Beyer, Lucas and Kolesnikov, Alexander and Weissenborn, Dirk and Zhai, Xiaohua and Unterthiner, Thomas and Dehghani, Mostafa and Minderer, Matthias and Heigold, Georg and Gelly, Sylvain and Uszkoreit, Jakob and Houlsby, Neil},
  booktitle = ICLR,
  year      = {2021},
}

@inproceedings{liu2022convnet,
  title     = {A ConvNet for the 2020s},
  author    = {Liu, Zhuang and Mao, Hanzi and Wu, Chao-Yuan and Feichtenhofer, Christoph and Darrell, Trevor and Xie, Saining},
  booktitle = CVPR,
  pages     = {11976--11986},
  year      = {2022},
}

@inproceedings{chen2020big,
  title     = {Big Self-Supervised Models are Strong Semi-Supervised Learners},
  author    = {Chen, Ting and Kornblith, Simon and Swersky, Kevin and Norouzi, Mohammad and Hinton, Geoffrey},
  booktitle = NIPS,
  volume    = {33},
  pages     = {22243--22255},
  year      = {2020},
}

@inproceedings{menon2021long,
  title     = {Long-Tail Learning via Logit Adjustment},
  author    = {Menon, Aditya Krishna and Jayasumana, Sadeep and Rawat, Ankit Singh and Jain, Himanshu and Veit, Andreas and Kumar, Sanjiv},
  booktitle = ICLR,
  year      = {2021},
}

@article{holm1979simple,
  title   = {A Simple Sequentially Rejective Multiple Test Procedure},
  author  = {Holm, Sture},
  journal = {Scandinavian Journal of Statistics},
  volume  = {6},
  number  = {2},
  pages   = {65--70},
  year    = {1979},
}

@article{lofstrom2015bias,
  title   = {Bias Reduction through Conditional Conformal Prediction},
  author  = {L{\"o}fstr{\"o}m, Tuve and Bostr{\"o}m, Henrik and Linusson, Henrik and Johansson, Ulf},
  journal = {Intelligent Data Analysis},
  volume  = {19},
  number  = {6},
  pages   = {1355--1375},
  year    = {2015},
  doi     = {10.3233/IDA-150775},
}

@inproceedings{liu2026tail,
  author    = {Liu, Shuqi and Huang, Jianguo and Ong, Luke},
  title     = {Conformal Prediction Meets Long-tail Classification},
  booktitle = AAAI,
  volume    = {40},
  number    = {28},
  pages     = {23828--23836},
  year      = {2026},
  doi       = {10.1609/aaai.v40i28.39558},
  note      = {arXiv:2508.11345},
}

@inproceedings{dhillon2024expected,
  author    = {Dhillon, Guneet S. and Deligiannidis, George and Rainforth, Tom},
  title     = {On the Expected Size of Conformal Prediction Sets},
  booktitle = {AISTATS},
  volume    = {238},
  series    = {Proceedings of Machine Learning Research},
  pages     = {1549--1557},
  year      = {2024},
  note      = {arXiv:2306.07254},
}

@inproceedings{bairaktari2025kandinsky,
  author    = {Bairaktari, Konstantina and Wu, Jiayun and Wu, Zhiwei Steven},
  title     = {Kandinsky Conformal Prediction: Beyond Class- and Covariate-Conditional
               Coverage},
  booktitle = {ICML},
  year      = {2025},
  note      = {arXiv:2502.17264},
}

@inproceedings{stutz2022learning,
  title     = {Learning Optimal Conformal Classifiers},
  author    = {Stutz, David and Dvijotham, Krishnamurthy and Cemgil, Ali Taylan and Doucet, Arnaud},
  booktitle = ICLR,
  year      = {2022},
}

@inproceedings{einbinder2022training,
  title     = {Training Uncertainty-Aware Classifiers with Conformalized Deep Learning},
  author    = {Einbinder, Bat-Sheva and Romano, Yaniv and Sesia, Matteo and Zhou, Yanfei},
  booktitle = NIPS,
  volume    = {35},
  year      = {2022},
}

@article{shafer2008tutorial,
  title   = {A Tutorial on Conformal Prediction},
  author  = {Shafer, Glenn and Vovk, Vladimir},
  journal = {Journal of Machine Learning Research},
  volume  = {9},
  pages   = {371--421},
  year    = {2008},
  note    = {arXiv:0706.3188},
}

@inproceedings{papadopoulos2002inductive,
  title     = {Inductive Confidence Machines for Regression},
  author    = {Papadopoulos, Harris and Proedrou, Kostas and Vovk, Volodya and Gammerman, Alex},
  booktitle = {Machine Learning: ECML 2002},
  series    = {Lecture Notes in Computer Science},
  pages     = {345--356},
  publisher = {Springer},
  year      = {2002},
  doi       = {10.1007/3-540-36755-1_29},
}

@article{lei2018distribution,
  title   = {Distribution-Free Predictive Inference for Regression},
  author  = {Lei, Jing and G'Sell, Max and Rinaldo, Alessandro and Tibshirani, Ryan J. and Wasserman, Larry},
  journal = {Journal of the American Statistical Association},
  volume  = {113},
  number  = {523},
  pages   = {1094--1111},
  year    = {2018},
  doi     = {10.1080/01621459.2017.1307116},
}

@inproceedings{vovk2012conditional,
  title     = {Conditional Validity of Inductive Conformal Predictors},
  author    = {Vovk, Vladimir},
  booktitle = {Asian Conference on Machine Learning (ACML)},
  series    = {Proceedings of Machine Learning Research},
  volume    = {25},
  year      = {2012},
  note      = {arXiv:1209.2673},
}

@article{barber2021limits,
  title   = {The Limits of Distribution-Free Conditional Predictive Inference},
  author  = {Barber, Rina Foygel and Cand{\`e}s, Emmanuel J. and Ramdas, Aaditya and Tibshirani, Ryan J.},
  journal = {Information and Inference: A Journal of the IMA},
  volume  = {10},
  number  = {2},
  pages   = {455--482},
  year    = {2021},
  doi     = {10.1093/imaiai/iaaa017},
}

@article{gibbs2025conditional,
  title   = {Conformal Prediction with Conditional Guarantees},
  author  = {Gibbs, Isaac and Cherian, John J. and Cand{\`e}s, Emmanuel J.},
  journal = {Journal of the Royal Statistical Society Series B: Statistical Methodology},
  volume  = {87},
  number  = {4},
  pages   = {1100--1126},
  year    = {2025},
  doi     = {10.1093/jrsssb/qkaf008},
}

@article{guan2022prediction,
  title   = {Prediction and Outlier Detection in Classification Problems},
  author  = {Guan, Leying and Tibshirani, Robert},
  journal = {Journal of the Royal Statistical Society Series B: Statistical Methodology},
  volume  = {84},
  number  = {2},
  pages   = {524--546},
  year    = {2022},
  doi     = {10.1111/rssb.12443},
}

@inproceedings{fisch2021few,
  title     = {Few-Shot Conformal Prediction with Auxiliary Tasks},
  author    = {Fisch, Adam and Schuster, Tal and Jaakkola, Tommi and Barzilay, Regina},
  booktitle = {ICML},
  year      = {2021},
  note      = {arXiv:2102.08898},
}

@inproceedings{podkopaev2021distribution,
  title     = {Distribution-Free Uncertainty Quantification for Classification under Label Shift},
  author    = {Podkopaev, Aleksandr and Ramdas, Aaditya},
  booktitle = {Conference on Uncertainty in Artificial Intelligence (UAI)},
  year      = {2021},
  note      = {arXiv:2103.03323},
}

@article{barber2023conformal,
  title   = {Conformal Prediction beyond Exchangeability},
  author  = {Barber, Rina Foygel and Cand{\`e}s, Emmanuel J. and Ramdas, Aaditya and Tibshirani, Ryan J.},
  journal = {The Annals of Statistics},
  volume  = {51},
  number  = {2},
  pages   = {816--845},
  year    = {2023},
  doi     = {10.1214/23-AOS2276},
}

@article{zhang2023deep,
  title   = {Deep Long-Tailed Learning: A Survey},
  author  = {Zhang, Yifan and Kang, Bingyi and Hooi, Bryan and Yan, Shuicheng and Feng, Jiashi},
  journal = PAMI,
  volume  = {45},
  number  = {9},
  pages   = {10795--10816},
  year    = {2023},
  doi     = {10.1109/TPAMI.2023.3268118},
}

@inproceedings{kang2020decoupling,
  title     = {Decoupling Representation and Classifier for Long-Tailed Recognition},
  author    = {Kang, Bingyi and Xie, Saining and Rohrbach, Marcus and Yan, Zhicheng and Gordo, Albert and Feng, Jiashi and Kalantidis, Yannis},
  booktitle = ICLR,
  year      = {2020},
  note      = {arXiv:1910.09217},
}

@inproceedings{cui2019class,
  title     = {Class-Balanced Loss Based on Effective Number of Samples},
  author    = {Cui, Yin and Jia, Menglin and Lin, Tsung-Yi and Song, Yang and Belongie, Serge},
  booktitle = CVPR,
  pages     = {9260--9269},
  year      = {2019},
  doi       = {10.1109/CVPR.2019.00949},
}

@inproceedings{cao2019learning,
  title     = {Learning Imbalanced Datasets with Label-Distribution-Aware Margin Loss},
  author    = {Cao, Kaidi and Wei, Colin and Gaidon, Adrien and Arechiga, Nikos and Ma, Tengyu},
  booktitle = NIPS,
  volume    = {32},
  year      = {2019},
  note      = {arXiv:1906.07413},
}

@article{papyan2020prevalence,
  title   = {Prevalence of Neural Collapse during the Terminal Phase of Deep Learning Training},
  author  = {Papyan, Vardan and Han, X. Y. and Donoho, David L.},
  journal = {Proceedings of the National Academy of Sciences},
  volume  = {117},
  number  = {40},
  pages   = {24652--24663},
  year    = {2020},
  doi     = {10.1073/pnas.2015509117},
}

@inproceedings{guo2017calibration,
  title     = {On Calibration of Modern Neural Networks},
  author    = {Guo, Chuan and Pleiss, Geoff and Sun, Yu and Weinberger, Kilian Q.},
  booktitle = {ICML},
  year      = {2017},
  note      = {arXiv:1706.04599},
}

@inproceedings{shanmugam2021better,
  title     = {Better Aggregation in Test-Time Augmentation},
  author    = {Shanmugam, Divya and Blalock, Davis and Balakrishnan, Guha and Guttag, John},
  booktitle = ICCV,
  year      = {2021},
  note      = {arXiv:2011.11156},
}

@article{hoerl1970ridge,
  title   = {Ridge Regression: Biased Estimation for Nonorthogonal Problems},
  author  = {Hoerl, Arthur E. and Kennard, Robert W.},
  journal = {Technometrics},
  volume  = {12},
  number  = {1},
  pages   = {55--67},
  year    = {1970},
  doi     = {10.1080/00401706.1970.10488634},
}

@book{cohen1988statistical,
  title     = {Statistical Power Analysis for the Behavioral Sciences},
  author    = {Cohen, Jacob},
  edition   = {2nd},
  publisher = {Lawrence Erlbaum Associates},
  address   = {Hillsdale, NJ},
  year      = {1988},
}
}

\clearpage
\setcounter{figure}{0}
\setcounter{table}{0}
\renewcommand{\thefigure}{S\arabic{figure}}
\renewcommand{\thetable}{S\arabic{table}}

\begin{center}
{\Large\bf Supplementary Material}
\end{center}
\vspace{4pt}

\noindent
Numbers carrying a letter or an \textsf{S} refer to this part: \cref{app:proof} and
\cref{tab:methods} are here. Plain numbers refer to the main paper above, whose sections,
figures, tables and propositions are numbered from one.

\appendix
\section{Proof of the marginal coverage guarantee}
\label{app:proof}

Write $\tilde s(x,y)=s(x,y)-\tilde\delta_y$ for the corrected score, and recall from
\hyperref[sec:splits]{Sec.~3.4} that $\tilde\delta_\cdot$ is built from $\varphi(\cdot)$ and from the rows
$\mathcal{I}_{\mathrm{fit}}$, while $c$ is computed from the disjoint rows
$\mathcal{I}_{\mathrm{rec}}$.

\emph{Step 1: $\tilde s$ is a fixed function.} $\varphi$ is a function of the classifier's
weights, and $g_\theta$, $\lambda$ and $n_{\mathrm{cal}}$ are functions of
$(X_i,Y_i)_{i\in\mathcal{I}_{\mathrm{fit}}}$. None of them reads
$(X_i,Y_i)_{i\in\mathcal{I}_{\mathrm{rec}}}$ or the test point. Conditionally on
$\mathcal{I}_{\mathrm{fit}}$, therefore, $\tilde s$ is a deterministic map
$\mathcal{X}\times\mathcal{Y}\to\mathbb{R}$, fixed before any of the rows that will be used to
calibrate $c$ are seen. All statements below are conditional on
$\mathcal{I}_{\mathrm{fit}}$; since the bound obtained is uniform in it, the unconditional
bound follows by averaging.

\emph{Step 2: exchangeability survives.} If $(Z_1,\dots,Z_m)$ is exchangeable and $f$ is a
fixed measurable map, then $(f(Z_1),\dots,f(Z_m))$ is exchangeable, because for any permutation
$\pi$ the vector $(f(Z_{\pi(1)}),\dots,f(Z_{\pi(m)}))$ is the image under the same product map
of a vector equal in distribution to $(Z_1,\dots,Z_m)$. Applying this with
$Z_i=(X_i,Y_i)$ for $i\in\mathcal{I}_{\mathrm{rec}}\cup\{n+1\}$ and $f=\tilde s$ gives that
$\{\tilde s(X_i,Y_i)\}_{i\in\mathcal{I}_{\mathrm{rec}}}\cup\{\tilde s(X_{n+1},Y_{n+1})\}$ is
exchangeable.

\emph{Step 3: the split conformal bound.} For exchangeable real random variables
$V_1,\dots,V_{n_{\mathrm{rec}}},V_{n+1}$ and $\kappa$ the empirical quantile of
$V_1,\dots,V_{n_{\mathrm{rec}}}$ at level
$\lceil (n_{\mathrm{rec}}+1)(1-\alpha)\rceil/n_{\mathrm{rec}}$, the standard argument
\cite{vovk2005algorithmic,angelopoulos2023gentle} gives
$\mathbb{P}(V_{n+1}\le \kappa)\ge 1-\alpha$. Take $V_i=\tilde s(X_i,Y_i)$.

\emph{Step 4: the event is set membership.} The recalibration scalar is defined as
$c=\kappa-\hat q$, so $\hat q + c = \kappa$ exactly. By \hyperref[eq:set]{Eq.~9},
$y\in\mathcal{C}(x)\iff s(x,y)\le\hat q+\tilde\delta_y+c\iff\tilde s(x,y)\le \kappa$, which is
the event Step 3 bounds. $\square$

Stating $c$ as a residual matters here rather than being a matter of taste. Had $c$ been the
quantile $\kappa$ itself while the prediction set still added $\hat q$, the threshold would
have been $\hat q + \kappa$ and $\hat q$ would enter twice; the bound would still hold, since
the set only grows, but the procedure would no longer be the one the implementation runs, whose
scalar solves for marginal coverage on top of the corrected thresholds and takes the value
$0.0023$ on the primary configuration.

\paragraph{What the proof does not give.} Only one scalar is calibrated on
$\mathcal{I}_{\mathrm{rec}}$, so $\tilde\delta_y$ may be arbitrarily complex without affecting
the argument; but for the same reason the conclusion is a statement about the marginal
distribution alone. Nothing here bounds
$\mathbb{P}(Y_{n+1}\in\mathcal{C}(X_{n+1})\mid Y_{n+1}=y)$ for any particular $y$, and for a
class with $n_y=0$ no such bound is available to any procedure. The disjointness in Step 1 is
also load-bearing: an earlier version of our code selected $\lambda$ over all classes including
the held-out ones, which does not merely bias an estimate but removes the premise, and it moved
the reported effect from $0.3$ to $0.5$ on the diagnostic we caught it with.

\section{What the guarantee covers, and what it cannot}
\label{app:scope}

Two claims run through this paper and they cover different things. Confusing them would let a
reader hear something we do not say, so they are separated here.

\paragraph{(A) Marginal validity.} \hyperref[prop:marginal]{Prop.~2} bounds
$\mathbb{P}(Y_{n+1}\in\mathcal{C}(X_{n+1}))$ from below by $1-\alpha$, where the probability is
over the joint draw of the recalibration rows and the test point, and the premise is that the
two are exchangeable. The scalar $c$ is fit on rows of the labelled classes, so the distribution
the guarantee is stated over is the one those rows are drawn from. A test point belonging to a
class with $n_y=0$ is not exchangeable with them --- that class contributes no row --- and the
bound does not reach it. This is not a gap in the proof but a limit on what any procedure can
promise: a class that supplies no calibration example cannot enter the exchangeability argument
that a distribution-free bound is built from.

\paragraph{(B) Empirical improvement at $n_y=0$.} What we claim for those classes is measured,
not proved: worst-class coverage at matched average set size, reported in \hyperref[tab:competitors]{Tab.~1}
over five splits with confidence intervals. Everything in \hyperref[sec:exp]{Sec.~4} about held-out classes
is of this kind. We do not claim, and the reader should not read, that PCC guarantees
class-conditional coverage for a class it has never seen a label of.

\paragraph{The flow, and which parts (A) touches.} Calibration rows of the labelled classes are
partitioned into $\mathcal{I}_{\mathrm{fit}}$ and $\mathcal{I}_{\mathrm{rec}}$. Everything
data-dependent except $c$ is estimated on $\mathcal{I}_{\mathrm{fit}}$: $\hat q$, the observed
offsets, $g_\theta$, $\hat\lambda$ and $n_{\mathrm{cal}}$. For $\hat\lambda$ that covers both
quantities \hyperref[eq:lambda]{Eq.~6} optimises over, not just the grid: the worst-class statistic it maximises and
the size-matching shift it is read at are both computed on $\mathcal{I}_{\mathrm{fit}}$ rows of
the labelled classes. Scoring $\hat\lambda$ on evaluation rows instead would not be deployable,
since a deployed system has no evaluation labels. Only $c$ reads
$\mathcal{I}_{\mathrm{rec}}$, and only $\mathcal{I}_{\mathrm{rec}}$ enters (A). The evaluation
rows and the held-out classes enter neither; they are read once, at the end, to report (B).
\Cref{tab:splits} states this per quantity.

\paragraph{Why the construction is a stress test and not a deployment.} No deployment labels a
class ``held out''. One calibrates on whatever labelled data exists and the guarantee covers the
population that data came from, which under-represents exactly the rare classes one cares about.
Deleting the calibration rows of chosen classes makes that under-representation total and
therefore visible: it is the worst case of a condition real deployments meet partially. It also
means the classes we test on were trained with abundant data, which \cref{app:noise} measures
the consequence of.

\section{What each method gives a class with no data}
\label{app:methods}

\Cref{tab:methods} lists the per-class threshold rules compared in this paper and what each
one produces when $n_y=0$. Three of them stay defined there without per-class labels, and
\hyperref[sec:comp]{Sec.~4.1} reports what that is worth.

% ---------------------------------------------------------------- method summary
\begin{table}[ht]
\centering
\small
\setlength{\tabcolsep}{3pt}
\begin{tabular}{@{}lll@{}}
\toprule
Method & $q_y$ built from & If $n_y=0$ \\
\midrule
\textsc{Standard} \cite{vovk2005algorithmic}  & shared quantile   & shared \\
\textsc{Classwise} \cite{sadinle2019least}    & own quantile      & $\infty$ \\
\textsc{Clustered} \cite{ding2023class}       & cluster quantile  & shared \\
\textsc{Rc3p} \cite{shi2024rank}              & quantile + rank   & $\infty$ \\
\textsc{Fuzzy} \cite{ding2026conformal}       & kernel weights    & shared \\
\textsc{Pas} \cite{ding2026conformal}         & shared quantile   & shared \\
\textsc{Interp-Q} \cite{ding2026conformal}    & blend of the two  & capped \\
\textsc{Tacp} \cite{liu2026tail}              & head penalty      & by group \\
\midrule
\textbf{PCC} (ours) & $g_\theta(\varphi(y))$ & \textbf{predicted} \\
\bottomrule
\end{tabular}
\caption{Methods that give a class its own conformal threshold. All of them target at least
marginal coverage; the columns that separate them are what happens to a class with no labelled
calibration examples, and how specific to that class the threshold it receives can be. Three
methods do produce a defined threshold there without per-class labels: \textsc{Pas} reshapes the
score using prevalence, \textsc{Interp-Q} caps the undefined classwise quantile at one before
blending, and \textsc{Tacp} penalises the score of head classes, which gives one degree of
freedom per prevalence group instead of one per class. Behaviours were obtained by running the
authors' released implementations on the same held-out classes, except for \textsc{Tacp}, whose
authors release no code; \hyperref[tab:competitors]{Tab.~1} reports the coverage that results.}
\label{tab:methods}
\end{table}

\paragraph{A group-conditional baseline on the same descriptors.} None of the published rows
above reads $\varphi$, so none of them isolates what the regression contributes. The baseline
that does is the obvious one: $k$-means on the standardised descriptors, one conformal quantile
per cluster calibrated from that cluster's labelled members, and a held-out class inheriting
its cluster's quantile. It is defined at $n_y=0$ for the same reason PCC is --- cluster
membership is read from the weights --- and it goes through the identical size matching and
statistics, so the two are directly comparable.

Over the five splits of the primary configuration it reaches $+0.054$ $[-0.004,+0.112]$
against PCC's $+0.058$ $[+0.006,+0.110]$, and the paired difference is $+0.004$
$[-0.014,+0.022]$. Only PCC's interval excludes zero, but the two methods are not separated by
this experiment, and we report it that way rather than as a win.

Two asymmetries run in the baseline's favour and should be stated with the number. Its cluster
count is swept over $\{5,10,20,50,100\}$ and the best is kept \emph{on the statistic it is
then scored by}, which is the same courtesy \hyperref[tab:competitors]{Tab.~1} extends to 	extsc{Fuzzy}
CP's bandwidth; PCC's $\hat\lambda$ is $0.10$ on every split and is selected on the labelled
classes alone, never on the classes it is scored on. And the winning $k$ is unstable --- $100$,
$10$, $50$, $20$, $20$ across the five splits --- so the baseline as deployed, with $k$ fixed
in advance, would do worse than the number above. What the comparison establishes is that at
this evaluation depth a grouping in $\varphi$-space captures most of what the regression on
$\varphi$ captures, and that separating them needs more evaluation rows per class than
ImageNet's validation set provides --- the same limit \cref{app:longtail} runs into from the
other direction.

\section{Evaluation metrics}
\label{app:metrics}

\Cref{tab:metrics} defines every quantity reported in the paper. The two regimes are the part
that matters: above a median of $30$ evaluation examples per class we report per-class worst
coverage, and below it a prevalence-binned surrogate, with the threshold fixed before any run
(\cref{app:prereg}).

% ---------------------------------------------------------------- metrics
\begin{table}[ht]
\centering
\small
\setlength{\tabcolsep}{3pt}
\begin{tabular}{@{}ll@{}}
\toprule
Metric & Definition \\
\midrule
$\hat c_y$
  & $\frac{1}{|J_y|}\sum_{i\in J_y}\mathbf{1}\{Y_i\in\mathcal{C}(X_i)\}$ \\[2pt]
Worst-class (A)
  & $\min_{y}\hat c_y$ \\[2pt]
Bin-worst (B)
  & $\min_{b}\hat c^{(b)}$, prevalence bins \\[2pt]
$\Delta$ worst
  & worst-class minus marginal \\[2pt]
Below target
  & $\frac{1}{|\mathcal{Y}|}\sum_y\mathbf{1}\{\hat c_y<1-\alpha\}$ \\[2pt]
SSCV
  & $\max_{b}|\hat c^{(b)}-(1-\alpha)|$, strata \\[2pt]
Worst-slab
  & $\min$ mean $\hat c_y$ over windows \\[2pt]
Set size
  & $\frac{1}{N}\sum_i|\mathcal{C}(X_i)|$ \\[2pt]
Empty rate
  & $\frac{1}{N}\sum_i\mathbf{1}\{|\mathcal{C}(X_i)|=0\}$ \\
\bottomrule
\end{tabular}
\caption{Evaluation metrics. $\hat c_y$ is the empirical coverage of class $y$, and regimes A and B are the per-class and prevalence-binned cases. $J_y$ is the set of test indices with label $y$ and $N$ the
number of test points. Bins and strata are pooled groups, used where a per-class estimate is
too noisy to report: prevalence bins require $200$ test rows each and set-size strata $30$.
The empty-set rate is included because an average set size below one is only possible when a
substantial share of test points receive nothing, a regime in which coverage and set size both
merely look low.}
\label{tab:metrics}
\end{table}

\section{Pre-registration and the fallback policy}
\label{app:prereg}

Two documents were frozen before any experiment in this paper was run, and both are in the
released repository. We summarise what they fix, because several of the choices would otherwise
look like decisions made after seeing results.

\paragraph{Measurability thresholds.} A survey of the score dumps, not of our memory, recorded
the median number of evaluation examples per class: $1168$ on ImageNet, $100$ on CIFAR-100,
$3$ on Pl@ntNet-300K and $2$ on iNaturalist-2018. With two examples a class's coverage can only
be $0$, $\tfrac12$ or $1$, so a minimum over $8142$ such classes is almost surely $0$ for every
procedure including the oracle, and differences between methods in that column carry no
information. The rule fixed in advance is therefore a median of at least $30$ evaluation
examples per class for per-class statistics (regime~A) and at least $200$ pooled evaluation rows
per prevalence bin otherwise (regime~B). Below the threshold, per-class statistics are still
computed but are written under a \texttt{withheld\_unmeasurable} key and may not enter a table
or decide a verdict. The same rule was later applied to $\lambda$ \emph{selection} as well as to
reporting, which was an omission we found on 2026-08-15 and fixed.

\paragraph{Behaviour of each baseline at $n_y=0$.} What each baseline emits for a class with
no calibration examples was written down before its released code was read: an infinite
threshold for classwise CP and for RC3P, a fall-back to the marginal threshold for Clustered CP
through its null cluster, and a kernel-weighted quantile over labelled classes for the method of
\citet{ding2026conformal}. All four predictions were then confirmed against the code. The
fall-back policy applied to an infinite threshold, namely substituting $\hat q$, was fixed at
the same time; it is what produces the zeros in \hyperref[tab:competitors]{Tab.~1}, and \hyperref[sec:comp]{Sec.~4.1}
says so in the text so that a zero is not read as a measurement.

\paragraph{Changes made after seeing results.} Three, all recorded. The oracle ceiling was
redefined from the unshrunk to the shrunk per-class quantile after PCC exceeded the unshrunk
version, for the reason given in \cref{app:oracle}. The calibration fraction was raised from
$0.50$ to $0.70$ after two points of the calibration-depth sweep turned out not to be binding on
a thin slice. The $\lambda$ ablation was extended from five splits to ten after it missed
Holm-corrected significance at five ($p=0.0094$ against a threshold of $0.0083$); it passes at
ten, and we report the number of splits for every table.

\section{The oracle ceiling, shrunk and unshrunk}
\label{app:oracle}

The ceiling reported in the main text is the \emph{best shrinkage} of a perfect per-class
offset: we form $\delta_y$ from the test labels and then select $\lambda$ for it exactly as
the method does. An earlier version used the unshrunk per-class quantiles, and PCC exceeded
it --- which is how we learned that the unshrunk version is not a bound. It is $\lambda=1$
applied to a perfect offset, and $\lambda=1$ is harmful for worst-class coverage
(\hyperref[tab:ablation]{Tab.~2}).

Across all $228$ configurations of the main sweep the two definitions agree in
$194$ cases, because the oracle's own shrinkage selects
$\lambda_{\text{oracle}}=1$ there; on the primary ImageNet arm they coincide at $+0.140$, so
the correction does not move that headline number. Where they differ, they differ by a lot,
and always in the same direction. \Cref{tab:oracle} lists the five largest gaps: the unshrunk
``ceiling'' falls as low as $-0.657$, far \emph{below} the marginal threshold it is supposed
to bound from above.

\begin{table}[ht]
\centering
\small
\setlength{\tabcolsep}{4pt}
\begin{tabular}{lccc}
\toprule
Configuration & Shrunk & Unshrunk & $\lambda_{\text{oracle}}$ \\
\midrule
C\_comp\_competitors\_s1 & $+0.0487$ & $-0.6571$ & $0.10$ \\
H\_heldout\_ho0p5\_s2 & $+0.1928$ & $-0.4605$ & $0.70$ \\
C\_comp\_competitors\_s2 & $+0.1569$ & $-0.4902$ & $0.70$ \\
H\_heldout\_ho0p7\_s2 & $+0.0667$ & $-0.5733$ & $0.70$ \\
H\_heldout\_ho0p9\_s2 & $+0.1579$ & $-0.4605$ & $0.70$ \\
\bottomrule
\end{tabular}
\caption{The five configurations where the two ceiling definitions differ most. A quantity
that can sit $0.65$ below the marginal threshold it is meant to bound is not a ceiling; the shrunk
version cannot be exceeded by PCC by construction, since PCC optimises the same $\lambda$
over a worse $\delta$.}
\label{tab:oracle}
\end{table}

\section{Score functions, miscoverage level and calibration depth}
\label{app:depth}

\Cref{tab:depth} sweeps the three axes \hyperref[sec:scores]{Sec.~4.5} and \hyperref[sec:depth]{Sec.~4.3}
report. The improvement survives every score and is strongest with LAC; it
attenuates for the adaptive ones, is zero at $\alpha=0.01$, and rises with
calibration depth over a range where the oracle ceiling is flat, which is what
makes the last block a statement about the method rather than about the ceiling.

\begin{table}[ht]
\centering
\small
\setlength{\tabcolsep}{3.5pt}
\begin{tabular}{llcc}
\toprule
Axis & Setting & $\Delta$ worst-class & \% ceiling \\
\midrule
\multirow{5}{*}{Score}
 & LAC \cite{sadinle2019least}                  & $\mathbf{+0.058}\ [+0.006,+0.110]$ & $42$ \\
 & PAS \cite{ding2026conformal}                 & $+0.066\ [-0.017,+0.149]$ & $28$ \\
 & SAPS \cite{huang2024conformal}               & $+0.031\ [-0.019,+0.081]$ & $24$ \\
 & APS \cite{romano2020classification}          & $+0.022\ [+0.004,+0.040]$ & $15$ \\
 & RAPS \cite{angelopoulos2021uncertainty}      & $+0.010\ [-0.009,+0.029]$ & $10$ \\
\midrule
\multirow{3}{*}{$\alpha$}
 & $0.10$ & $\mathbf{+0.058}\ [+0.006,+0.110]$ & $42$ \\
 & $0.05$ & $+0.040\ [-0.008,+0.088]$ & $39$ \\
 & $0.01$ & $+0.000\ [\ \ 0.000,\ 0.000]$ & $0$ \\
\midrule
\multirow{5}{*}{\shortstack[l]{Cal.\\depth}}
 & $10$/class  & $+0.033\ [+0.016,+0.050]$ & $26$ \\
 & $25$/class  & $+0.030\ [+0.011,+0.049]$ & $23$ \\
 & $50$/class  & $+0.041\ [+0.021,+0.061]$ & $32$ \\
 & $100$/class & $\mathbf{+0.060}\ [+0.032,+0.088]$ & $46$ \\
 & $175$/class & $+0.059\ [+0.031,+0.086]$ & $46$ \\
\bottomrule
\end{tabular}
\caption{Score function, miscoverage level and calibration depth on held-out ImageNet classes,
five splits, ten for calibration depth. The ceiling is flat across the depth block, $0.128$ to
$0.132$, so the rise there comes from the method and not from a wider ceiling.}
\label{tab:depth}
\end{table}

\section{Ablations}
\label{app:ablation}

\hyperref[tab:ablation]{Tab.~2} is the evidence for \hyperref[sec:ablations]{Sec.~4.2}. Each row removes one component
over ten ImageNet splits at $\alpha=0.10$, and a positive value means the full method is
better than the ablated one.

% ---------------------------------------------------------------- ablations

\section{Why recalibration is invisible at matched size}
\label{app:invisible}

Let $t_y=\hat q+\tilde\delta_y+c$ be the thresholds a variant emits and let $T$ be the target
average set size. The matched comparison reports each variant at $t_y+\sigma$, where $\sigma$
solves $\frac1N\sum_i|\{y:s(X_i,y)\le t_y+\sigma\}|=T$. Adding a constant $a$ to every
threshold replaces $t_y$ by $t_y+a$ and hence $\sigma$ by $\sigma-a$, leaving
$t_y+a+(\sigma-a)=t_y+\sigma$ unchanged, so the matched thresholds are identical. Marginal recalibration adds exactly such a
constant, and so does the $n^\star$ rule when it substitutes one global value, which is why both
ablations must read $0.0000$ in \hyperref[tab:ablation]{Tab.~2}. The zero is a property of the comparison and
carries no information about the component.

\Cref{tab:invisible} gives the view in which the component is visible, namely the thresholds as
emitted, over ten ImageNet splits. Recalibration buys $0.0026$ of marginal coverage for $0.31$
of average set size. Two further readings matter. First, the raw \textsc{Pcc} thresholds are far
wider than the marginal ones, $2.31$ against $1.21$, so this view cannot be used to compare
methods and exists only to show where marginal validity lives; the size matching in the main
tables removes exactly this difference. Second, \textsc{Pcc} lands at $0.8989$ against a target
of $0.900$ while the marginal threshold lands at $0.9013$ on the same rows. The deficit of
$0.0024$ is about thirty times the finite-sample slack $1/(n+1)$ and is the measured cost of
letting $g_\theta$, $\lambda$ and $c$ share calibration rows, as discussed in
\hyperref[sec:guarantee]{Sec.~3.5}.

\begin{table}[ht]
\centering
\small
\setlength{\tabcolsep}{4pt}
\begin{tabular}{lcc}
\toprule
Thresholds as emitted & Marginal cov. & Set size \\
\midrule
Marginal threshold $\hat q$        & $0.9013$ & $1.21$ \\
\textsc{Pcc}, no recalibration     & $0.8963$ & $2.00$ \\
\textsc{Pcc}, with recalibration   & $0.8989$ & $2.31$ \\
\midrule
Target                             & $0.9000$ & --- \\
\bottomrule
\end{tabular}
\caption{The unmatched view, ten ImageNet splits, $\alpha=0.10$. Recalibration moves marginal
coverage towards the target and costs set size. The matched comparison used everywhere else
cancels this shift by construction.}
\label{tab:invisible}
\end{table}

\section{Additional backbones, endogenous and exogenous descriptors}
\label{app:backbones}

We hypothesised that the two additional backbones failed because each was given its own
classifier head as $\varphi$, making the descriptors endogenous to the model that produced the
scores. \Cref{tab:backbones} refutes it: with a fixed ResNet-50 head instead --- whose weights
correlate $0.001$ element-wise with each model's own head, so it is genuinely exogenous ---
every interval still contains zero.

The reason is visible in the last column. All six arms have $35$ evaluation examples per class,
because ImageNet's validation split provides only $50$ per class in total and some must go to
calibration. \hyperref[fig:sweeps]{Fig.~2}a places $35$ below the depth at which any effect is
detectable, so these experiments are inconclusive rather than negative. Settling the question
needs an evaluation set larger than the validation split.

\begin{table}[ht]
\centering
\small
\setlength{\tabcolsep}{4pt}
\begin{tabular}{@{}llc@{}}
\toprule
Backbone & $\varphi$ & $\Delta$ worst-class \\
\midrule
ConvNeXt-T \cite{liu2022convnet}     & own   & $+0.000\ [-0.000,+0.000]$ \\
                                     & fixed & $+0.011\ [-0.017,+0.040]$ \\
ResNet-50 \cite{he2016deep}          & own   & $+0.017\ [-0.005,+0.040]$ \\
                                     & fixed & $+0.006\ [-0.005,+0.017]$ \\
ViT-B/16 \cite{dosovitskiy2021image} & own   & $-0.029\ [-0.072,+0.015]$ \\
                                     & fixed & $-0.011\ [-0.066,+0.043]$ \\
\bottomrule
\end{tabular}
\caption{Both descriptor arms on three backbones, five splits. ``own'' uses each model's own
classifier head, ``fixed'' the ResNet-50 head also used for ImageNet. Top-1 accuracies of all
three matched published values to four decimal places, which we used as a correctness check on
the weight extraction.}
\label{tab:backbones}
\end{table}

\section{Long-tailed datasets under a measurability threshold}
\label{app:longtail}

\paragraph{How many classes actually train $g_\theta$.} A class trains the regression only if
its own offset is observable, which needs $n_{\mathrm{cal}}=25$ calibration rows. That count,
not the size of the label space, is what separates the three datasets. ImageNet supplies $700$
of its $700$ labelled classes; Pl@ntNet-300K supplies $75$ of $757$ and iNaturalist-2018 $79$
of $5699$. Both long-tailed sets therefore fit nine coefficients from fewer than eighty
observations, an order of magnitude below ImageNet, \emph{despite} having more classes in
total: a long tail spends the dump on classes too thin to measure a $\delta_y$ for, so raising
$K$ lowers the number of usable regression targets rather than raising it. The cross-validated
errors of $0.036$, $0.072$ and $0.127$ track that count.

This also says what does \emph{not} separate Pl@ntNet from iNaturalist. Their fits are built
from $75$ and $79$ classes, the same number for this purpose, yet one yields a significant
improvement under the restriction below and the other yields nothing. The difference is
downstream of the regression: iNaturalist's oracle ceiling is $0.018$, so there is almost
nothing to take even with a perfect $\delta$, and its held-out space is $2443$ classes against
Pl@ntNet's $324$. $K$ is not the axis; measurable calibration depth, and the ceiling it leaves,
are.

\begin{table}[ht]
\centering
\small
\setlength{\tabcolsep}{3pt}
\begin{tabular}{@{}llcc@{}}
\toprule
Dataset & Thr. & Kept & $\Delta$ worst-class \\
\midrule
Pl@ntNet  & $\ge 35$ & $151$ & $+0.020\ [-0.011,+0.052]$ \\
Pl@ntNet  & $\ge 75$ & $98$  & $\mathbf{+0.019}\ [+0.010,+0.028]$ \\
iNat-2018 & $\ge 35$ & $237$ & $+0.000$ \\
iNat-2018 & $\ge 75$ & $75$  & $+0.000$ \\
\bottomrule
\end{tabular}
\caption{Restricting evaluation to classes whose slice can support a per-class estimate. Five
splits. Pl@ntNet recovers a significant improvement; iNaturalist does not, and there
$\lambda=0$ means the method declines to act rather than acting and failing.}
\label{tab:longtail}
\end{table}

The threshold changes the label space, so the row counts matter. On Pl@ntNet-300K the
$\ge 75$ rule keeps $98$ of $1081$ classes, of which $34$ are held out, and the survivors carry
a median of $116$ calibration and $198$ evaluation examples each. At $\ge 35$ it keeps $151$
classes with medians of $69$ and $116$. The improvement is significant at the stricter threshold
and not at the looser one, which is the ordering \hyperref[fig:sweeps]{Fig.~2}a predicts, since $35$ sits
at the edge of measurability while $75$ is clear of it.

On iNaturalist-2018 the survivors are not starved either: at $\ge 35$ the $237$ surviving
classes hold a median of $40$ calibration and $56$ evaluation examples. \Cref{tab:inatcurve}
gives the shrinkage curve on its labelled classes and explains the zero. The curve decreases
monotonically, so $\hat\delta$ is not degenerate; it carries signal, and the signal is wrong.
Because the grid contains $\lambda=0$, the selection step returns the marginal threshold and the
method neither helps nor harms. The cross-validated error of $g_\theta$ is $0.127$ here against
$0.036$ on ImageNet, so the regression is roughly three and a half times worse in a label space
eight times larger, fitted in a descriptor space of the same dimension. A richer regression
head does not recover it: the quadratic and kernel heads of \cref{app:heads} abstain in all
five splits here, exactly as the ridge does. Whether richer descriptors would is untested, and
is the open question.

\begin{table}[ht]
\centering
\small
\setlength{\tabcolsep}{4pt}
\begin{tabular}{@{}lccccc@{}}
\toprule
$\lambda$ & $0$ & $0.1$ & $0.3$ & $0.5$ & $1.0$ \\
\midrule
worst & $\mathbf{0.699}$ & $0.466$ & $0.385$ & $0.350$ & $0.319$ \\
\bottomrule
\end{tabular}
\caption{Shrinkage curve on the labelled classes of iNaturalist-2018, mean over five splits, at
the $\ge 75$ threshold. Every positive $\lambda$ is worse than none, so the selection step
returns $\lambda=0$. A flat curve would have meant $\hat\delta$ was degenerate; a decreasing one
means it is informative and points the wrong way.}
\label{tab:inatcurve}
\end{table}

\section{ImageNet-C, per corruption}
\label{app:imagenetc}

\Cref{tab:imagenetc} reports all $15$ corruptions at severities $3$ and $5$, three splits
each. Calibration uses clean images and evaluation uses corrupted ones, so exchangeability
fails and no method retains a coverage guarantee here, as the corrupted-input
experiment of the main paper states.

The empty-set column is the direct evidence of that. It ranges from
$0.063$ (brightness, severity 3) to
$0.757$ (contrast, severity 5), and an
average set size below $1.0$ is only possible when a large share of test points receive
\emph{nothing}. A threshold calibrated on clean images does not give narrow sets on corrupted
ones; it abstains. Coverage and set size both merely look ``low'' in that regime, which is why
we added this column.

\begin{table}[ht]
\centering
\small
\setlength{\tabcolsep}{3pt}
\begin{tabular}{llcccc}
\toprule
Corruption & Sev. & $\Delta$ & Cov. & Size & Empty \\
\midrule
brightness & 3 & $+0.0200$ & $0.841$ & $1.345$ & $0.063$ \\
brightness & 5 & $+0.0067$ & $0.746$ & $1.362$ & $0.103$ \\
contrast & 3 & $-0.0133$ & $0.600$ & $1.297$ & $0.171$ \\
contrast & 5 & $+0.0000$ & $0.053$ & $0.317$ & $0.757$ \\
defocus\_blur & 3 & $+0.0067$ & $0.500$ & $1.146$ & $0.268$ \\
defocus\_blur & 5 & $+0.0000$ & $0.229$ & $0.763$ & $0.497$ \\
elastic\_transform & 3 & $+0.0000$ & $0.706$ & $1.355$ & $0.112$ \\
elastic\_transform & 5 & $+0.0000$ & $0.264$ & $1.422$ & $0.164$ \\
fog & 3 & $+0.0133$ & $0.629$ & $1.345$ & $0.143$ \\
fog & 5 & $+0.0000$ & $0.360$ & $1.157$ & $0.264$ \\
frost & 3 & $+0.0000$ & $0.441$ & $1.187$ & $0.247$ \\
frost & 5 & $+0.0000$ & $0.331$ & $1.095$ & $0.305$ \\
gaussian\_noise & 3 & $+0.0067$ & $0.509$ & $1.266$ & $0.201$ \\
gaussian\_noise & 5 & $+0.0000$ & $0.096$ & $0.636$ & $0.545$ \\
glass\_blur & 3 & $+0.0000$ & $0.278$ & $1.076$ & $0.326$ \\
glass\_blur & 5 & $+0.0000$ & $0.141$ & $0.762$ & $0.500$ \\
impulse\_noise & 3 & $+0.0133$ & $0.502$ & $1.309$ & $0.188$ \\
impulse\_noise & 5 & $+0.0000$ & $0.113$ & $0.674$ & $0.523$ \\
jpeg\_compression & 3 & $+0.0200$ & $0.772$ & $1.372$ & $0.085$ \\
jpeg\_compression & 5 & $+0.0067$ & $0.522$ & $1.309$ & $0.184$ \\
motion\_blur & 3 & $+0.0067$ & $0.512$ & $1.206$ & $0.236$ \\
motion\_blur & 5 & $+0.0000$ & $0.220$ & $0.885$ & $0.425$ \\
pixelate & 3 & $+0.0067$ & $0.685$ & $1.378$ & $0.112$ \\
pixelate & 5 & $+0.0000$ & $0.371$ & $1.190$ & $0.258$ \\
shot\_noise & 3 & $+0.0000$ & $0.470$ & $1.244$ & $0.213$ \\
shot\_noise & 5 & $+0.0000$ & $0.113$ & $0.714$ & $0.492$ \\
snow & 3 & $+0.0000$ & $0.494$ & $1.297$ & $0.184$ \\
snow & 5 & $+0.0000$ & $0.262$ & $1.002$ & $0.349$ \\
zoom\_blur & 3 & $+0.0000$ & $0.528$ & $1.280$ & $0.190$ \\
zoom\_blur & 5 & $+0.0000$ & $0.367$ & $1.152$ & $0.272$ \\
\bottomrule
\end{tabular}
\caption{ImageNet-C, per corruption and severity, three splits. $\Delta$ worst is the change
in worst-class coverage on held-out classes at matched set size. Pooled over all $90$
conditions the improvement is $+0.003$ $[+0.001,+0.005]$.}
\label{tab:imagenetc}
\end{table}

\section{What the regression fits}
\label{app:fit}

\begin{figure}[h]
\centering
\includegraphics[width=\linewidth]{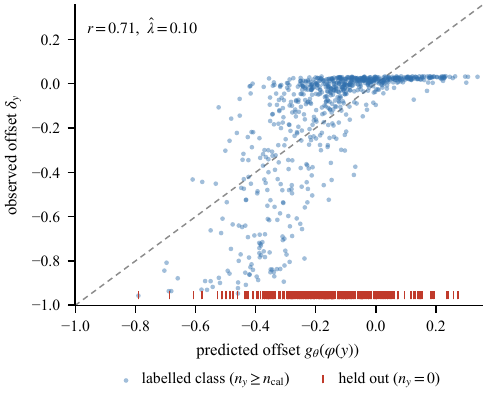}
\caption{The mechanism, on the primary ImageNet configuration. Every labelled class
contributes an observed offset $\delta_y$ and a predicted one $g_\theta(\varphi(y))$; held-out
classes appear as ticks on the horizontal axis, because the observed coordinate does not exist
for them. The regression recovers the direction of the correction and understates its
magnitude, and the classes that decide the minimum lie in the dense band near $\delta_y=0$,
where the cost is prediction error rather than shrinkage towards the mean. That error is the
quantity \hyperref[prop:sup]{Prop.~1} bounds, though \cref{app:lipschitz} finds that bound is not what selects $\hat\lambda$.}
\label{fig:method}
\end{figure}

\paragraph{Which classes enter the fit.} A labelled class contributes an observed offset only
if it holds at least $n_{\mathrm{cal}}=25$ calibration rows. On the primary ImageNet
configuration that threshold never binds: the thinnest labelled class holds $101$ rows and the
median $177$, so all $700$ enter. It binds on the long-tailed sets, where the median labelled
class holds two, and there the exclusion is the measurability problem of \cref{app:longtail}
rather than a separate design choice. Weighting thin classes by their row count instead of
excluding them would therefore change nothing on ImageNet, and on the long-tailed sets would
admit a per-class quantile estimated from two observations.

\section{Which classes the correction moves}
\label{app:whohelped}

Worst-class coverage is a minimum, and a minimum says nothing about the shape it was taken
from. \Cref{fig:coverage} therefore plots the whole sorted curve and \cref{tab:whohelped}
reports the mean change by prevalence quintile, with the fraction of classes that rise and
fall. Both come from the first split of the primary configuration, the only one for which we
stored the per-class breakdown.

\begin{figure*}[t]
\centering
\includegraphics[width=\linewidth]{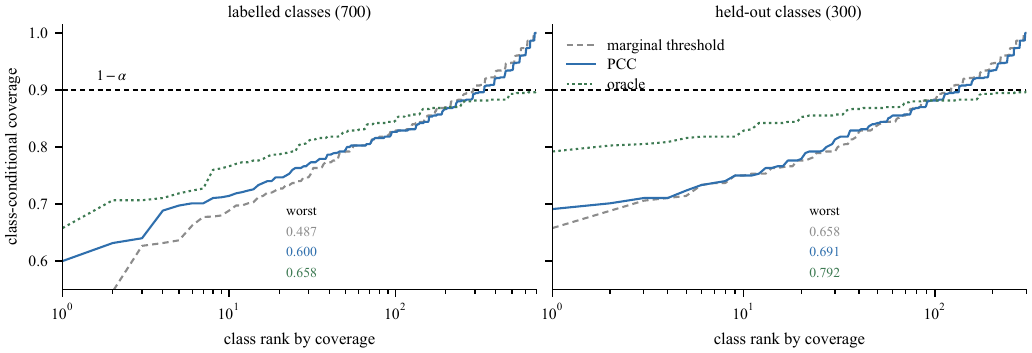}
\caption{Class-conditional coverage of every class, sorted, on the primary ImageNet
configuration. Rank rather than percentile on the horizontal axis, because worst-class
coverage is decided by the leftmost few classes and a linear axis compresses them to nothing.
The correction rotates the curve: the tail rises and the classes that had coverage to spare
give some back, with the crossing near rank $150$ of $300$ on the held-out classes. The oracle
performs the same rotation further, which is the sense in which it is a ceiling rather than a
different kind of method.}
\label{fig:coverage}
\end{figure*}

The curve rotates rather than shifting. On the held-out classes the lowest-ranked class rises
from $0.658$ to $0.691$ while the median falls from $0.921$ to $0.909$, and macro-coverage moves
from $0.904$ to $0.900$. The oracle rotates the same curve further, lifting its worst class to
$0.792$ and pulling its median to $0.883$; what separates the two is how far the rotation goes,
not which way. Because the comparison holds average set size fixed, coverage given to the
classes that were failing has to come from somewhere, and it comes in thin slices from the
classes that had it to spare. Whether that trade is worth making is a decision about the
application rather than about the method, which is why we report both the minimum and the
average throughout.

Two cautions on reading the fractions in \cref{tab:whohelped}. At the $76$ evaluation examples
per class of this configuration a single example is $0.013$ of coverage, so how many classes
register as changed at all depends on evaluation depth as much as on the correction; the mean
change of $-0.004$ is the quantity that survives a change of depth. And the worst class is not
the same class before and after, class $754$ before the correction and class $663$ after, so the
two ends of that arrow are not one class being repaired.

Prevalence carries little information here, because ImageNet's held-out classes hold between
$693$ and $1199$ rows in the score dump and are therefore close to balanced. The quintiles are
included so that the same table can be read for the long-tailed datasets, where prevalence
varies over three orders of magnitude.

\begin{table}[ht]
\centering
\small
\setlength{\tabcolsep}{4pt}
\begin{tabular}{@{}lccc@{}}
\toprule
Quintile & Classes & Med.\ prev. & Mean $\Delta\hat c_y$ \\
\midrule
1, frequent & $60$ & $1182$ & $-0.0078$ \\
2 & $60$ & $1174$ & $-0.0033$ \\
3 & $60$ & $1169$ & $-0.0033$ \\
4 & $60$ & $1163$ & $-0.0039$ \\
5, rare & $60$ & $1147$ & $-0.0024$ \\
\midrule
\multicolumn{4}{@{}l@{}}{Improved $11.3\%$, worsened $43.3\%$} \\
\multicolumn{4}{@{}l@{}}{Worst class $0.658 \rightarrow 0.691$} \\
\bottomrule
\end{tabular}
\caption{Change in per-class coverage on the $300$ held-out ImageNet classes, by prevalence
quintile, at matched average set size. The correction concentrates its effect on the failing
classes and pays for it with small reductions spread across the rest.}
\label{tab:whohelped}
\end{table}

\section{Score-level interventions do not equalise class coverage}
\label{app:score-level}

\hyperref[sec:setup]{Sec.~3.1} argues that improving $s(x,y)$ cannot equalise coverage across classes,
because any such change moves every class in the same direction and supplies no per-class
degree of freedom. \Cref{tab:scorelevel} is the measurement behind that argument. We built four
score functions from the same backbone, the plain softmax and three aggregations over augmented
copies of the input \cite{shanmugam2021better}, and evaluated each with the marginal threshold across two miscoverage
levels, three descriptor families and three splits, for $48$ configurations.

The four scores are close to indistinguishable in worst-class coverage. At $\alpha=0.10$ and
matched to within $0.03$ in average set size they give $0.534$, $0.485$, $0.534$ and $0.518$,
so the spread across four quite different score constructions is $0.049$, and two of the three
augmentation-based scores are \emph{worse} than the plain softmax. Their macro-coverage and
marginal coverage agree to three decimal places. This is what a score-level intervention looks
like when the objective is a minimum over classes.

Two honest caveats. The comparison is on one backbone and one dataset, so it supports the
structural argument and does not substitute for it. And this configuration has a median of $25$
evaluation examples per class, which \hyperref[sec:depth]{Sec.~4.3} places below the measurability threshold,
so the near-zero effect of \textsc{Pcc} in the last column of \cref{tab:scorelevel} is
uninformative about \textsc{Pcc}. That configuration is in fact the fifth instance of the
evaluation-depth pattern we found, and we found it before we understood the pattern, which is
part of why the diagnosis in \hyperref[sec:depth]{Sec.~4.3} took three attempts.

\begin{table}[ht]
\centering
\small
\setlength{\tabcolsep}{3pt}
\begin{tabular}{lcccc}
\toprule
Score & Worst-class & Set size & Macro & $\Delta$ \textsc{Pcc} \\
\midrule
Softmax (LAC)          & $0.534$ & $1.256$ & $0.900$ & $-0.005$ \\
Average over augment.  & $0.485$ & $1.228$ & $0.898$ & $+0.002$ \\
Learned aggregation    & $0.534$ & $1.233$ & $0.899$ & $+0.001$ \\
Uncertainty-modulated  & $0.518$ & $1.230$ & $0.900$ & $-0.003$ \\
\bottomrule
\end{tabular}
\caption{Four score functions on the same backbone under the marginal threshold, held-out
classes, $\alpha=0.10$, three splits. Changing the score leaves worst-class coverage within
$0.049$ and does not order the scores the way marginal coverage would. The last column reports
what \textsc{Pcc} adds on top of each, in a configuration whose $25$ evaluation examples per
class put it below the depth at which the statistic can move (\hyperref[sec:depth]{Sec.~4.3}).}
\label{tab:scorelevel}
\end{table}

\section{Splitting the calibration rows}
\label{app:split}

\hyperref[prop:marginal]{Prop.~2} requires the rows used to build $\tilde\delta$ and the rows used to
calibrate $c$ to be disjoint, and the configuration reported throughout \hyperref[sec:exp]{Sec.~4} does not
provide that: $g_\theta$, $\lambda$, $n_{\mathrm{cal}}$ and $c$ all read the same calibration
rows. We measured the cost of the reuse in \cref{app:invisible}, where \textsc{Pcc} lands
$0.0024$ of marginal coverage below the marginal threshold on the same rows.

\begin{table}[ht]
\centering
\small
\setlength{\tabcolsep}{3pt}
\begin{tabular}{@{}lll@{}}
\toprule
Quantity & Rows & Classes \\
\midrule
$\hat q$ (1)      & calib. & labelled \\
$\delta_y$ (3)      & calib. & labelled, $n_y\ge n_{\mathrm{cal}}$ \\
$g_\theta$                       & calib. & labelled, $n_y\ge n_{\mathrm{cal}}$ \\
$\lambda$ (5)      & calib. & labelled \\
$n_{\mathrm{cal}}$               & calib. & labelled \\
$c$ (9)               & calib. & labelled \\
\midrule
\emph{all reported numbers}      & \emph{eval.} & \emph{both tables} \\
\bottomrule
\end{tabular}
\caption{What each estimated quantity may see; the equation numbers are those of the main
paper. Evaluation rows and held-out classes are used only for reporting. The default configuration lets $g_\theta$, $\lambda$ and $c$ share the same
calibration rows, which is convenient but not what \hyperref[prop:marginal]{Prop.~2} assumes; \textsc{Pcc}-split
divides those rows in two and is the variant the proposition covers.}
\label{tab:splits}
\end{table}

\textsc{Pcc}-split is the variant the proposition covers. The calibration rows of each labelled
class are divided by a per-class random split; the first part estimates $\delta_y$, fits
$g_\theta$ and selects $\lambda$ and $n_{\mathrm{cal}}$, and the second part is used for nothing
except the conformal quantile that gives $c$. Held-out classes and evaluation rows are untouched
in both variants, so the comparison isolates the effect of the reuse.

The cost is statistical and falls entirely on $g_\theta$. The offset $c$ is a single scalar
computed from pooled rows, so it needs a large total and not depth within each class: at the
two fractions we report it is estimated from roughly $31$k and $62$k rows, whose conformal
slack is $3\times10^{-5}$ and $2\times10^{-5}$, both two orders of magnitude below the
$0.0024$ deficit being measured. The regression is the part that pays, since it needs depth
per class: the reuse variant fits on $174$ calibration examples per class and the two split
variants on $131$ and $87$. \hyperref[tab:ablation]{Tab.~2} puts the price of that at roughly $0.019$ per
halving. We report two fractions rather than one so that the choice cannot be read as tuned.

\Cref{tab:split} reports both arms against the reuse variant over ten paired splits.
Satisfying the premise removes the deficit and costs nothing measurable. At a quarter of the
calibration rows held back, marginal coverage moves from $0.8994$ to $0.9011$ against a target
of $0.900$, and the deficit relative to the marginal threshold on the same rows falls from
$0.0018$ to zero at four decimal places. Paired split by split the improvement in marginal
coverage is $+0.00161$ with interval $[+0.00084,+0.00239]$, which excludes zero, while the
change in worst-class coverage is $+0.0048$ with interval $[-0.0046,+0.0142]$, which does not.

We expected the split to cost something, since $g_\theta$ loses a quarter of its calibration
depth, and within the resolution of ten splits it does not. The likely reason is visible in
\hyperref[tab:ablation]{Tab.~2}: worst-class improvement is flat between $100$ and $175$ calibration examples
per class, and the split moves us from $174$ to $131$, which is inside that flat region. At
half the rows held back, leaving $87$ per class, the point estimate does fall back to the
reuse level, which is the direction the depth curve predicts.

The recommendation that follows is to use the split variant. It is what \hyperref[prop:marginal]{Prop.~2}
covers, its coverage lands on target rather than below it, and the price at a quarter is not
measurable here. \hyperref[tab:competitors]{Tab.~1} is reported under the split variant for that reason. The
remaining tables use reuse, which is what every earlier run used, and the difference is now
measured rather than argued: over five paired splits at the protocol of \hyperref[sec:exp]{Sec.~4}, with
the published baselines fitted alongside, the two variants give worst-class improvements that
agree to four decimals, a paired difference of $+0.0000$ $[+0.0000,+0.0000]$. The lower-tail
metrics do separate them, by $0.0006$ at the fifth percentile, which is the resolution
worst-class coverage lacks.

\begin{table}[ht]
\centering
\small
\setlength{\tabcolsep}{3pt}
\begin{tabular}{@{}lccc@{}}
\toprule
Variant & Marg.\ cov. & Deficit & $\Delta$ worst \\
\midrule
Reuse         & $0.8994$ & $-0.0018$ & $+0.0588$ \\
Split $25\%$  & $\mathbf{0.9011}$ & $\mathbf{-0.0000}$ & $+0.0636$ \\
Split $50\%$  & $0.9003$ & $-0.0007$ & $+0.0585$ \\
\midrule
Target        & $0.9000$ & ---       & --- \\
\bottomrule
\end{tabular}
\caption{\textsc{Pcc}-split against reuse, ten paired splits on ImageNet at $\alpha=0.10$.
``Deficit'' is marginal coverage minus that of the marginal threshold measured on the same
evaluation rows, so it isolates the optimism from reusing calibration rows rather than the
finite-sample slack common to both. $\Delta$ worst-class is the matched-size statistic used
throughout the paper.}
\label{tab:split}
\end{table}

\section{The row budget, and whether the results depend on it}
\label{app:rowbudget}

Every table in this paper is computed on a random subsample of $250{,}000$ rows from a score
dump of $1{,}153{,}051$, and the reason is worth stating plainly because it began as a
practical one. The full dump is a $1{,}153{,}051\times1000$ matrix of \texttt{float32}, which
is $4.61$\,GB, and holding it alongside the score transform and the calibration and evaluation
slices exceeds the memory of the machines we ran on.

Three properties make the subsample a valid operation rather than a convenience. It is drawn
as a fraction \emph{within each class} rather than as a global cap, which preserves the
prevalence structure of the dump; a global cap would equalise the class counts and make the
prevalence feature constant, which would in turn make the prevalence ablation vacuous. Because
the draw is uniform at random and does not depend on any score or label value, the calibration
and test rows remain exchangeable, so \hyperref[prop:marginal]{Prop.~2} is unaffected. And the depth that
remains, about $250$ rows per class, exceeds the $84$ per class that the pre-registered premise
required, by a factor of $2.7$.

What the subsample does cost is precision, and the direction of that cost is the part a reader
should be able to check rather than take on trust. Both of the depth results in this paper say
the cost runs against us: \hyperref[tab:ablation]{Tab.~2} shows worst-class improvement rising with
calibration depth, from $0.033$ at ten examples per class to $0.059$ at $175$, and
\hyperref[fig:sweeps]{Fig.~2}a shows the measurable effect rising with evaluation depth. A larger budget
should therefore give a \emph{larger} number, which would make $250{,}000$ conservative.

\Cref{tab:rowbudget} sweeps the budget over a factor of three, at five splits each. The
prediction above is not what we found, and neither is its opposite. The improvement does not
vary detectably with the budget: the three point estimates are $0.058$, $0.050$ and $0.076$,
each of them lies inside the confidence intervals of the other two, and the regression of the
improvement on $\log$ rows gives a slope of $+0.013$ per $e$-fold, which five splits cannot
distinguish from zero. The middle budget is the lowest of the three, so the sequence is not
even monotone.

For the question this section exists to answer, that is a better result than the one we
predicted. The row budget was chosen for a practical reason, and the measurement says the
choice does not determine the conclusion over the range we can afford to test. The paper's
headline is not an artefact of the subsample.

One trend in the table is real and is a consequence of the paper's own argument. The oracle
ceiling falls as the budget grows, from $0.140$ to about $0.116$. Worst-class coverage is a
minimum over three hundred classes, and a minimum computed from $76$ evaluation examples per
class is more extreme than one computed from $228$, for the method and for the oracle alike.
Deeper evaluation makes the statistic less noisy and therefore less generous, which is the
same effect \hyperref[fig:sweeps]{Fig.~2}a measures from the other direction.

\begin{table}[ht]
\centering
\small
\setlength{\tabcolsep}{3pt}
\begin{tabular}{@{}rcclc@{}}
\toprule
Rows & Cal. & Eval. & $\Delta$ worst-class & Ceil. \\
\midrule
$250$k & $177$ & $\ 76$ & $+0.058\ [+0.006,+0.110]$ & $0.140$ \\
$500$k & $354$ & $152$ & $+0.050\ [+0.011,+0.089]$ & $0.105$ \\
$750$k & $532$ & $228$ & $+0.076\ [+0.034,+0.118]$ & $0.116$ \\
\bottomrule
\end{tabular}
\caption{The row budget swept over a factor of three, five splits each, everything else at the
primary configuration. No point estimate falls outside the intervals of the other two. The
$250$k row reproduces the corresponding entry of \hyperref[fig:sweeps]{Fig.~2}b to four decimal places,
which is the check that the sweep shares the configuration of the rest of the paper.}
\label{tab:rowbudget}
\end{table}

\section{Descriptor choices}
\label{app:descriptors}

The nine features are $\|w_y\|_2$, $b_y$, the cosine distance from $w_y$ to its nearest
competing row, the mean cosine distance to its $k$ nearest rows for $k\in\{5,10,50\}$, the
spread of those distances, the cosine distance to the mean row, and $\log$ prevalence. A tenth
candidate, the cosine distance to the single nearest row computed at $k=1$, is the feature
closest in construction to the regression target and is held out of the feature set for that
reason. All nine are standardised to zero mean and unit variance before the ridge, using
the labelled classes' statistics, so nothing depends on their relative scales --- $\|w_y\|_2$
and $\log$ prevalence differ by an order of magnitude.

\paragraph{Heads that are not linear.} A model scoring by embedding similarity has no weight
row to read, but it has the object $w_y$ stands in for: a class prototype, whether a normalised
centroid or a learned embedding the logit takes an inner product with. Six of the nine features
are angular --- the nearest-row distance, the three $k$-neighbourhood means, their spread, and
the distance to the mean row --- and are defined on prototypes unchanged. The two that are not
fare worse. A cosine classifier normalises its rows, so $\|w_y\|_2$ is constant by
construction, and most such heads carry no bias, which would leave $\varphi$ with seven entries
and neither of its two scale features. That is arithmetic, not a result: whether the remaining
geometry carries enough signal is an empirical question, and we have not run it.

\hyperref[tab:ablation]{Tab.~2} ablates by feature group. Distances alone and prevalence alone are both
significantly worse than the full set, while removing prevalence is not, so the geometric
features carry the signal and the one label-derived feature is dispensable. The neighbour
counts $k\in\{5,10,50\}$ were fixed in advance and never tuned; $k=1$ enters only through the
nearest-row distance and the margin. The cross-validated error of $g_\theta$ is stable across
splits, at $0.0356$ to $0.0372$ over the five ImageNet splits, and the selected $\lambda$ is
$0.1$ in all five, which is the sense in which the fit is not delicate.

\Cref{tab:descriptors} varies the three choices a reader is most likely to question, at five
splits each. The metric matters and the other two do not. Replacing cosine distances with
Euclidean ones costs $0.013$ of worst-class improvement and raises the cross-validated error of
$g_\theta$ from $0.036$ to $0.040$, which is the only arm where the regression measurably
degrades. Dropping the head bias and changing the neighbour counts leave both quantities where
they were.

Two of those arms return numbers identical to the default at four decimal places, and that is
worth explaining rather than presenting as three independent confirmations. Worst-class
coverage is a minimum over three hundred classes estimated from about $76$ evaluation examples
each, so it is a coarse statistic dominated by one badly covered class. The feature sets do
differ and so do the fitted regressions, but neither change moves that particular class into or
out of a different coverage bucket. The honest reading is that the result is insensitive to
these two choices, not that they have no effect on $g_\theta$.

\begin{table}[ht]
\centering
\small
\setlength{\tabcolsep}{3pt}
\begin{tabular}{@{}llc@{}}
\toprule
Descriptors & $\Delta$ worst-class & CV err. \\
\midrule
Cosine, all       & $+0.058\ [+0.006,+0.110]$ & $0.036$ \\
Euclidean         & $+0.045\ [+0.009,+0.081]$ & $0.040$ \\
No head bias      & $+0.058\ [+0.006,+0.110]$ & $0.036$ \\
$k\in\{1,2,5\}$   & $+0.058\ [+0.006,+0.110]$ & $0.036$ \\
$k\in\{1,10,20,100\}$ & $+0.063\ [+0.016,+0.111]$ & $0.036$ \\
\bottomrule
\end{tabular}
\caption{Descriptor choices, five splits each on held-out ImageNet classes. ``CV error'' is the
cross-validated mean squared error of $g_\theta$ on the labelled classes. The neighbour count
held out as the distance reference follows the smallest $k$ requested, so each arm withholds its
own feature and none is advantaged.}
\label{tab:descriptors}
\end{table}

\section{The regression head}
\label{app:heads}

\hyperref[sec:correction]{Sec.~3.3} fits $g_\theta$ by ridge regression. Whether that linear form is what
limits the method is a fair question, and it is sharpest on iNaturalist-2018, where the
regression carries no usable signal at all. We replaced the ridge with two closed-form
nonlinear heads and changed nothing else: a \emph{quadratic} head, a ridge over the squares
and pairwise products as well as the features, and \emph{kernel} ridge with an RBF kernel.
Both are shrunk, selected and recalibrated by the same procedure as the linear head, so the
rows of \cref{tab:heads} are comparable with the rest of the paper. The class floor is checked
against the coefficients each head actually fits, which for the quadratic head over $p=9$
features is $54$ rather than $9$.

The linear head is not the constraint. On ImageNet the quadratic head moves the point estimate
by $0.0004$ and widens the interval to include zero, beating the linear head on one of the five
splits, tying on two and losing on two; its standard deviation across splits is $0.079$ against
the linear head's $0.059$. The kernel head is worse at every width we tried, and at
$\gamma=3$ it is worse than applying no correction.

On the long-tailed datasets the reason is visible rather than inferred. Pl@ntNet-300K is where
the extra capacity ought to pay, since there is a real effect to find, and there the shrinkage
step declines to use the nonlinear predictions: $\hat\lambda=0$ in three of five splits for
both nonlinear heads and in none of them for the linear head. On iNaturalist-2018 every head
abstains in every split, so what fails at $K=8142$ is not the functional form of $g_\theta$.
The oracle ceiling there is $0.018$, and that is the more immediate obstacle.

The direction is the one \hyperref[prop:sup]{Prop.~1} describes. Worst-class coverage is controlled by the
largest error in the predicted offsets, and a head with more capacity fits the labelled classes
more closely without any corresponding control on its worst error over classes it has never
seen. \Cref{app:lipschitz} measures that sup-norm error and finds it does not select $\hat\lambda$,
so this remains an interpretation rather than a second measurement; what is measured is that the extra capacity buys no point estimate, costs
interval width on every dataset, and is declined outright wherever the shrinkage step is free
to decline it.

\begin{table}[ht]
\centering
\small
\setlength{\tabcolsep}{3pt}
\begin{tabular}{@{}llccc@{}}
\toprule
Data & Head & $\Delta$ worst-class & $\bar\lambda$ & $\hat\lambda{=}0$ \\
\midrule
\multirow{5}{*}{\shortstack[l]{Image\\Net}}
 & linear    & $\mathbf{+0.058}\ [+0.006,+0.110]$ & $0.10$ & $0/5$ \\
 & quadratic & $+0.058\ [-0.011,+0.128]$ & $0.20$ & $0/5$ \\
 & kernel    & $+0.026\ [+0.008,+0.045]$ & $0.32$ & $0/5$ \\
 & \ \ $\gamma{=}0.3$ & $+0.046\ [+0.004,+0.088]$ & $0.23$ & $0/3$ \\
 & \ \ $\gamma{=}3$   & $-0.022\ [-0.065,+0.021]$ & $0.27$ & $0/3$ \\
\midrule
\multirow{3}{*}{\shortstack[l]{Pl@nt\\Net}}
 & linear    & $\mathbf{+0.019}\ [+0.010,+0.028]$ & $0.22$ & $0/5$ \\
 & quadratic & $+0.014\ [-0.009,+0.038]$ & $0.06$ & $3/5$ \\
 & kernel    & $+0.009\ [-0.005,+0.023]$ & $0.04$ & $3/5$ \\
\midrule
iNat & all & $+0.000\ [0.000,0.000]$ & $0.00$ & $5/5$ \\
\bottomrule
\end{tabular}
\caption{Replacing the ridge $g_\theta$ with a nonlinear head, everything else at the
configuration of the corresponding table. Five splits, except the two indented RBF-width rows,
which are three and carry their own ceiling of $0.153$. Unindented kernel rows use $\gamma=1$.
Oracle ceilings are $0.140$ on ImageNet, $0.086$ on Pl@ntNet-300K under the $75$-example
restriction of \cref{tab:longtail}, and $0.018$ on iNaturalist-2018. The last column counts
splits in which the shrinkage step declined the prediction entirely.}
\label{tab:heads}
\end{table}

\section{How much the head row has to be worth}
\label{app:noise}

Our zero-calibration classes are ImageNet classes whose calibration rows we delete, so the head
row supplying $\varphi(y)$ was fitted with their training data. A class that is genuinely rare
has a row fitted from few examples. Two questions follow: how much does the correction depend
on that row, and where do real rare classes sit relative to the classes we tested on.

The second is measurable without retraining, and without the confound that makes the direct
test unreadable --- in a long-tailed dataset a class with few training examples also has few
calibration rows, so its observed offset is noisy too, and a regression of prediction error on
class abundance cannot separate a noisy descriptor from a noisy target. Head geometry alone
carries no such confound: the head is fitted on the training split while abundance is read off
the calibration split, which the head never saw. \Cref{tab:headrow} reports it. In models
trained on genuinely long-tailed data, the rows of data-poor classes sit \emph{further} from
the mean row than the head classes', not closer, and on Pl@ntNet-300K the relationship is
strong. ImageNet is the control and shows nothing, as it must: its classes are balanced by
construction, so there is no variation for the measurement to track.

That direction matters, because the obvious perturbation is the wrong one. Shrinking a row
towards the mean models a class whose geometry has washed out; the measurement says real rare
rows move the other way. We therefore perturb outward, adding isotropic noise scaled to
$\|\varepsilon\| = \sigma\|w_y-\bar w\|$ for held-out classes only. In high dimension
$\varepsilon$ is nearly orthogonal to $w_y-\bar w$, so the distance-to-mean ratio becomes
$\sqrt{1+\sigma^2}$ and a measured ratio $r$ reads back as $\sigma=\sqrt{r^2-1}$: Pl@ntNet's
$1.611$ is $\sigma=1.26$ and iNaturalist's $1.072$ is $\sigma=0.39$. The realised ratios matched
the intended ones to three decimals in every run.

Held-out descriptors are read off the perturbed head while the labelled ones stay on the
original, so $\hat\lambda$ is selected exactly as in \hyperref[sec:exp]{Sec.~4} and the method cannot adapt
to the handicap. Two rows of \cref{tab:noise} are checks rather than results: $\sigma=0$
reproduces $+0.058$, and $\hat\lambda=0.10$ at every $\sigma$, confirming the perturbation never
reached the labelled side. An earlier version of this experiment shrank the head in one pass
and $\hat\lambda$ moved, because $\varphi$ is built from distances to competing rows and the
held-out rows are among the candidates; that version measured a method partly adapting to its
own handicap.

The correction rests on the geometry. It survives $\sigma=0.39$ at $+0.049$, is indistinguishable
from zero by $\sigma=0.60$, and by $\sigma=2$ is significantly \emph{harmful} at $-0.069$ with an
interval excluding zero. Shrinkage cannot rescue this, because $\hat\lambda$ is chosen on the
labelled classes whose rows are intact. The lower-tail metrics fall monotonically alongside, so
what degrades is the tail as a whole and not one class.

What this does \emph{not} explain is the long-tailed failures, and we had expected it to. The
two datasets sit on opposite sides of the sweep's breakpoint from their outcomes: iNaturalist
has the milder perturbation, $\sigma=0.39$, where the method still works, yet it abstains
entirely; Pl@ntNet has the severe one, $\sigma=1.26$, where the sweep says the effect is gone,
yet it is where we measure a significant improvement. Head-row geometry is a real sensitivity
and not the mechanism behind those results, which \cref{app:longtail} continues to attribute to
regression quality and to a near-zero oracle ceiling. The sweep transfers a magnitude measured
on one dataset onto another, and the inversion is the evidence that such a transfer does not
carry the explanation with it.

\begin{table}[ht]
\centering
\small
\setlength{\tabcolsep}{3pt}
\begin{tabular}{@{}lccc@{}}
\toprule
Data & $r$ & $\sigma=\sqrt{r^2-1}$ & $\rho$ with abundance \\
\midrule
ImageNet, control & $1.04$ & $0.28$ & $-0.09$ \\
iNaturalist-2018  & $1.07$ & $0.39$ & $-0.36$ \\
Pl@ntNet-300K     & $\mathbf{1.61}$ & $\mathbf{1.26}$ & $\mathbf{-0.88}$ \\
\bottomrule
\end{tabular}
\caption{Head-row geometry against class abundance. $r$ is the ratio of median
$\|w_y-\bar w\|$ between the bottom and top deciles of calibration depth, and $\rho$ is the
Spearman correlation of $\|w_y\|$ with log abundance. ImageNet is the control: its deciles
differ by a factor of $1.0$ in abundance, so it has almost no variation to track, and it shows
none.}
\label{tab:headrow}
\end{table}

\begin{table}[ht]
\centering
\small
\setlength{\tabcolsep}{3pt}
\begin{tabular}{@{}lccc@{}}
\toprule
$\sigma$ & ratio & $\Delta$ worst-class & CVaR$_{5\%}$ \\
\midrule
$0.00$ & $1.00$ & $\mathbf{+0.058}\ [+0.006,+0.110]$ & $0.732$ \\
$0.39$ & $1.07$ & $+0.049\ [+0.000,+0.098]$ & $0.725$ \\
$0.60$ & $1.17$ & $+0.043\ [-0.014,+0.099]$ & $0.718$ \\
$1.26$ & $1.61$ & $-0.014\ [-0.072,+0.043]$ & $0.672$ \\
$2.00$ & $2.24$ & $-0.069\ [-0.127,-0.012]$ & $0.625$ \\
\bottomrule
\end{tabular}
\caption{Held-out head rows perturbed outward by $\sigma$, five splits each, everything else at
the primary configuration. $\sigma=0.39$ and $\sigma=1.26$ are the magnitudes
\cref{tab:headrow} measures for iNaturalist-2018 and Pl@ntNet-300K. The oracle ceiling is
$0.140$ throughout, since the oracle reads the test labels and not $\varphi$. CVaR$_{5\%}$ is
the mean coverage of the worst five per cent of classes and falls monotonically with the
minimum, so the tail moves as a whole.}
\label{tab:noise}
\end{table}

\section{Runtime}
\label{app:runtime}

We report the cost of a whole configuration rather than of individual operations, because
that is what the run logs measure and it is the number a practitioner cares about.
\Cref{tab:runtime} gives the median wall-clock time per configuration for each phase of the main
experiment. \textsc{Pcc} itself adds no training: the descriptors are read from the classifier's
weight matrix, $g_\theta$ is a closed-form ridge fit over at most $K$ points with $p=9$
features, and the shrinkage grid evaluates eight thresholds. No images, gradients or GPU are
involved at any point after the score dump exists.

The baselines dominate the cost. A configuration that additionally runs the seven baseline
entries takes $476$\,s against $128$\,s without them, so the comparison costs roughly $350$\,s
per split, or about $2.7$ times the whole rest of the pipeline. Most of that is the
kernel-weighted variants, which are evaluated at five bandwidths each. The main sweep is $228$ configurations and
$5.6$ hours of CPU time; the additions of \cref{app:split,app:rowbudget,app:descriptors} add
a further $73$ and about $3$ hours.

\begin{table}[ht]
\centering
\small
\setlength{\tabcolsep}{4pt}
\begin{tabular}{llrr}
\toprule
Phase & Content & Configs & Median \\
\midrule
C & baseline comparison      & $3$  & $481$\,s \\
D & ImageNet-C               & $90$ & $21$\,s \\
E & additional backbones     & $30$ & $21$\,s \\
F & evaluation-depth sweep   & $25$ & $235$\,s \\
G & long-tailed datasets     & $20$ & $63$\,s \\
H & held-out fraction sweep  & $25$ & $131$\,s \\
S & score and $\alpha$ sweep & $35$ & $127$\,s \\
\midrule
\multicolumn{3}{l}{Total, $228$ configurations} & $5.6$\,h \\
\multicolumn{3}{l}{Later additions, $184$ configurations} & $9.5$\,h \\
\bottomrule
\end{tabular}
\caption{Wall-clock cost per configuration, one CPU core, measured from the run reports. Phase D
and E are fast because their evaluation slices are small; phase F is slow because each of its
five depth caps is run at ten splits.}
\label{tab:runtime}
\end{table}

\section{Matching set size without the test set}
\label{app:sigmacalib}

\hyperref[eq:sigma]{Eq.~4} solves for the scalar that brings average set size to the target, and that
average is taken over the evaluation rows. No labels enter --- set size does not need them ---
but the evaluation \emph{inputs} do, which makes the comparison transductive. A deployed system
sees one test point at a time and has no test-set average to match against, so it could not
reproduce the matching, only the thresholds that precede it.

We therefore re-ran the primary configuration with both the target and the shift solved on the
calibration slice, columns restricted to the table's classes and every row kept. Restricting
rows by label would leave nothing for the held-out table, since a class with no calibration rows
is what that table is about; asking instead how many of those classes each calibration row
admits is a quantity a deployment can compute. Coverage is still read off the evaluation rows.

The improvement does not depend on the transductive part. Over the same five splits it is
$+0.058$ $[0.006,0.110]$ either way, agreeing to four decimals, while the shift applied to
\textsc{Pcc} moves from $-0.0019$ to $-0.0025$ and the set size it achieves on the evaluation
rows moves from $1.223$ to $1.215$ --- so the change is real and the statistic is simply not
sensitive to it, the same coarseness \cref{app:descriptors} describes. The lower tail does
register it, at $0.7318$ against $0.7308$.

One caveat belongs here rather than in a footnote. The oracle's shrinkage is selected on the
evaluation slice and this variant does not re-derive that selection, so under calibration
matching the oracle optimises one objective and is read under another; its ceiling collapses
and we do not report it for this arm. That does not affect what the arm is for: \textsc{Pcc} and
the marginal threshold are matched the same way as each other, which is what the comparison
requires. The oracle is a scale, not a comparator.

\section{Every choice the method makes}
\label{app:config}

\Cref{tab:config} lists every quantity a reader would have to guess otherwise. Two of them are
worth a sentence. The shrinkage grid contains $0$, which is what lets the method decline to act
rather than acting badly, and the entry $1$ is included so that the unshrunk correction is
always among the candidates and its cost is visible in \hyperref[tab:ablation]{Tab.~2} rather than assumed.
And the target set size $T$ is not a free parameter: it is the average set size the marginal
threshold itself produces on the labelled calibration rows, so ``matched size'' means matched to
the uncorrected marginal threshold, and nothing is tuned by choosing it.

Nothing in this table was selected on evaluation rows or on held-out classes. The two quantities
that are selected at all, $\hat\lambda$ and $n_{\mathrm{cal}}$, are selected on the labelled
label space, which is why the labelled and held-out tables are reported apart
(\hyperref[sec:splits]{Sec.~3.4}).

\begin{table}[ht]
\centering
\small
\setlength{\tabcolsep}{3pt}
\begin{tabular}{@{}ll@{}}
\toprule
Quantity & Value \\
\midrule
Shrinkage grid $\Lambda$
  & $0,\ .05,\ .1,\ .2,\ .3,\ .5,\ .7,\ 1$ \\
Ridge penalty & $1.0$, fixed, never tuned \\
Descriptor scaling & standardised on fit classes \\
$n_{\mathrm{cal}}$ & $25$, pre-registered \\
Target size $T$ & avg.\ size of $\hat q$ \\
$\alpha$ & $0.10$ unless stated \\
Held-out fraction & $0.30$ unless swept \\
Calib.\ fraction & $0.70$ of row budget \\
Row budget & $250{,}000$, per-class random \\
Splits & seeds $0$--$4$ ($0$--$9$ ablations) \\
Neighbour counts $k$ & $5,10,50$, never tuned \\
$k=1$ & held out as the distance reference \\
$g_\theta$ head & ridge (\cref{app:heads}) \\
Recal.\ slice & $0$ reuse, $0.25$ split \\
\bottomrule
\end{tabular}
\caption{Every hyperparameter and protocol constant. Values are the code's defaults, which are
what produced every number in the paper; the sweeps state their own departures. The ridge
penalty is fixed rather than cross-validated, because the fit has at most $K$ points and nine
features and we would rather report a fixed choice than a tuned one.}
\label{tab:config}
\end{table}

\section{Reproducibility}
\label{app:repro}

All $412$ configurations write one JSON report each, containing the resolved configuration,
the seed, every metric in \cref{tab:metrics} for both tables, the selected $\lambda$ and its
whole curve, the measurability regime, wall-clock time, the interpreter and package versions,
and the git commit of the code that produced it. The pre-registration documents summarised in
\cref{app:prereg} are in the same repository and are dated before the runs.

Two limitations of the comparison should be stated rather than implied. First, baselines are
invoked from their authors' released implementations, which we consider the only defensible way
to report a baseline's behaviour at $n_y=0$, but we have \emph{not} reproduced their published
numbers on their own benchmarks; our claim is about what their code does on our data, not about
whether we have implemented their method as they would. Second, \textsc{Tacp} \cite{liu2026tail}
is discussed in \hyperref[sec:related]{Sec.~2} and absent from \hyperref[tab:competitors]{Tab.~1} because its authors
release no code and it is evaluated on benchmarks we do not have, so any row we produced would
be our own reimplementation held to a lower standard than the rest of the table.

\section{Tail statistics beyond the minimum}
\label{app:tail}

Worst-class coverage is one order statistic and inherits the variance of one class, so a reader
may reasonably ask what a less extreme summary of the lower tail shows. \Cref{tab:tail} reports
five of them on the $300$ held-out ImageNet classes. The arms are the ones \hyperref[tab:competitors]{Tab.~1}
compares and are read at the same matched average set size; the coverage vectors are the ones
that table is built from, saved from the same run so that the two cannot disagree about the
split or the fit.

\begin{table}[ht]
\centering
\small
\setlength{\tabcolsep}{4pt}
\begin{tabular}{@{}lcccc@{}}
\toprule
Statistic & Marginal & \textsc{Pcc} & Oracle & $\Delta$ \textsc{Pcc} \\
\midrule
Worst class        & $0.6579$ & $0.6912$ & $0.7922$ & $+0.0333$ \\
$5$th percentile   & $0.7646$ & $0.7662$ & $0.8421$ & $+0.0016$ \\
$10$th percentile  & $0.7992$ & $0.8155$ & $0.8571$ & $+0.0163$ \\
CVaR$_{5\%}$       & $0.7313$ & $0.7381$ & $0.8230$ & $+0.0068$ \\
CVaR$_{10\%}$      & $0.7565$ & $0.7630$ & $0.8387$ & $+0.0066$ \\
\midrule
Mean over classes  & $0.9039$ & $0.8998$ & $0.8812$ & $-0.0041$ \\
\bottomrule
\end{tabular}
\caption{Lower-tail statistics of per-class coverage on the held-out classes, seed $0$ of the
primary configuration under the reuse variant, all arms at matched average set size.
CVaR$_{q}$ is the mean coverage of the worst $q$ of classes. This is a single split and
therefore carries no interval: it is a diagnostic, and \hyperref[tab:competitors]{Tab.~1} is the
five-split statement.}
\label{tab:tail}
\end{table}

Two things are visible. The tail moves in the same direction as the minimum at every depth,
which is also what \cref{tab:noise} shows as the head rows are perturbed. And the gain is
concentrated near the bottom: \textsc{Pcc} moves the minimum by $+0.0333$ and the mean of the
worst five per cent by $+0.0068$, a fifth as much, where the oracle moves them by $+0.1343$ and
$+0.0917$, two thirds as much. The single quantiles are jumpy at this sample size --- the fifth
percentile of $300$ classes is the fifteenth class, and it moves $+0.0016$ while the tenth
percentile moves $+0.0163$ --- which is why the CVaR rows are the ones to read. The last row is
the price: at matched size, coverage moved into the tail comes out of the mean.

The concentration is what the selection rule asks for. $\hat\lambda$ is chosen to maximise
worst-class coverage on the labelled classes (\cref{app:prereg}), so the method spends its one
scalar on the minimum, while the oracle reads every class's test labels and has no such
bottleneck. Selecting $\hat\lambda$ against a risk-averse surrogate instead --- CVaR$_{5\%}$,
say --- would plausibly trade some of the headline for a flatter tail and less variance across
splits, and is the version we would build next. We did not substitute it here because the
criterion was fixed before the runs (\cref{app:prereg}), and a pre-registered statistic is
worth more than the best of several chosen afterwards.

\section{Both sides of the sup-norm bound}
\label{app:lipschitz}

\hyperref[prop:sup]{Prop.~1} assumes each $F_y$ is $L$-Lipschitz and concludes that two matched offset
vectors give worst-class coverages within $L$ times their sup-norm distance. Both quantities
are observable, so neither needs to be assumed. This section measures both, on the primary
configuration over five splits, and the result is the one \hyperref[sec:correction]{Sec.~3.3} reports: the bound holds
and does not explain where $\hat\lambda$ lands.

Two things are read at each $\lambda$. The sup-norm distance is taken on the \emph{matched}
thresholds, which is what the proposition is stated over; taking it on the raw offsets instead
drops $\sigma(\lambda\hat\delta)-\sigma(\delta^\star)$, a constant added to every class and
therefore a direct shift of a sup norm. And because the proof invokes Lipschitzness only at the
two thresholds being compared, the constant it needs is not a global one: $L_{\mathrm{eff}}$ is
the smallest constant for which the proof's own inequality holds at those points, and
$L_{\mathrm{med}}$ is the median of the same ratio over classes.

\begin{table}[ht]
\centering
\small
\setlength{\tabcolsep}{3pt}
\begin{tabular}{@{}lccccc@{}}
\toprule
$\lambda$ & $\|\bar\delta-\bar\delta^\star\|_\infty$ & $\Delta\sigma$ &
$L_{\mathrm{eff}}$ & $L_{\mathrm{med}}$ & worst-class \\
\midrule
$0.00$ & $0.9421$ & $+0.008$ & $19.1$ & $0.54$ & $0.5825$ \\
$0.05$ & $0.9234$ & $+0.011$ & $10.3$ & $0.49$ & $0.6197$ \\
$0.10$ & $0.9006$ & $+0.009$ & $16.0$ & $0.51$ & $\mathbf{0.6405}$ \\
$0.20$ & $0.8416$ & $-0.014$ & $16.8$ & $0.47$ & $0.6136$ \\
$0.30$ & $0.7776$ & $-0.042$ & $34.7$ & $0.47$ & $0.5700$ \\
$0.50$ & $0.6627$ & $-0.099$ & $16.2$ & $0.43$ & $0.4855$ \\
$0.70$ & $\mathbf{0.5510}$ & $-0.156$ & $13.3$ & $0.42$ & $0.4090$ \\
$1.00$ & $0.6467$ & $-0.242$ & $14.1$ & $0.39$ & $0.1256$ \\
\bottomrule
\end{tabular}
\caption{Both sides of \hyperref[eq:supbound]{Eq.~7} on the held-out label space, five splits at the
primary configuration. $\Delta\sigma$ is the differential size-matching shift. The
$\lambda=0$ and $\lambda=0.1$ rows reproduce \hyperref[tab:competitors]{Tab.~1}: their difference is
$+0.0580$. The labelled label space, where $\hat\lambda$ is selected, behaves the same way:
the distance is smallest at $\lambda=0.7$ and worst-class coverage peaks at $0.1$.}
\label{tab:lipschitz}
\end{table}

The bound holds at every $\lambda$ on both label spaces, which is the check that the two sides
are being read at the same thresholds. It is also useless. Instantiated with
$L_{\mathrm{eff}}$ it exceeds the gap it bounds by a factor of $99$ on the held-out space and
$49$ on the labelled one. The reason is visible in the two columns: $L_{\mathrm{med}}$ is
$0.4$ to $0.5$ and barely moves, so the typical class is well behaved, while
$L_{\mathrm{eff}}$ is ten to thirty-five times larger. With $76$ evaluation rows per class the
empirical $\hat F_y$ is a step function of height $1/76=0.013$, so any class whose two
thresholds fall within one row of each other returns an arbitrarily large ratio. What
$L_{\mathrm{eff}}$ measures at this depth is the discreteness of the estimate, not the
smoothness of $F_y$; $L_{\mathrm{med}}$ is the number to read, and it says a threshold change
of $\varepsilon$ moves a typical class's coverage by about $\varepsilon/2$.

The substantive finding is that the sup norm does not select $\hat\lambda$. It falls
monotonically from $\lambda=0$ to $\lambda=0.7$ and rises again at $1$, so it does have an
interior minimum --- but at $\lambda=0.7$, while worst-class coverage peaks at $0.1$ and has
already fallen by $0.23$ by the time the distance reaches its minimum. Between the two optima
the columns move in opposite directions at every step of the grid. A reader given only
\hyperref[eq:supbound]{Eq.~7} would choose $\lambda=0.7$ and lose most of the effect.

What does move with $\lambda$ is $\Delta\sigma$, from $+0.008$ to $-0.242$: a wider correction
inflates average set size, and \hyperref[eq:sigma]{Eq.~4} pays for it by shifting every threshold down,
the worst class included. That term is four times the size of the entire effect this paper
reports and is exactly what a sup-norm distance between two \emph{already matched} vectors
cannot see, since it is common to all classes. This is why $\lambda=1$ is not merely
suboptimal but catastrophic --- worst-class coverage $0.1256$ against $0.5825$ at
$\lambda=0$ --- even though its sup-norm distance is smaller than at $\lambda=0$.

\Cref{tab:lipschitz} also answers what the shrinkage grid itself is worth. Its range brackets
the optimum on both sides: $\lambda=0$ is the null and $\lambda=1$ is catastrophic, so there is
nothing to gain by extending it. Its resolution is load-bearing, and unevenly so. Worst-class
coverage peaks at $\lambda=0.1$ on \emph{both} label spaces --- the labelled one, where
$\hat\lambda$ is selected, and the held-out one, where the headline is read --- so the selection
rule transfers rather than merely being self-consistent, and $\hat\lambda=0.10$ on all five
splits. But the neighbouring grid points cost $0.021$ and $0.027$, which is a third of the
effect being reported, and $\lambda=0.5$ already scores below the null on both spaces. A grid
whose smallest non-zero value is $0.5$ therefore selects $\lambda=0$ and forgoes the effect
entirely. The density near zero is not a formality; a uniformly spaced grid of the same size
would have found nothing.

\hyperref[prop:sup]{Prop.~1} is therefore sound and its bound verified, but it is a qualitative statement
about why a worst-class objective is governed by a maximum rather than a mean, and it should
not be read as an account of where $\hat\lambda$ lands. The selection rule of
\hyperref[eq:lambda]{Eq.~6} is empirical for this reason, and this measurement is the argument for
keeping it so.

\end{document}